\definecolor{lightgray}{gray}{0.9}
\newtheorem{thm}{Theorem}
\def\BState{\State\hskip-\ALG@thistlm}
\DeclareMathAlphabet{\mathpzc}{OT1}{pzc}{m}{it}
\newtheorem{definition}{Definition}
\newtheorem{theorem}{Theorem}
\newtheorem{proposition}[theorem]{Proposition}
\newtheorem{remark}{Remark}
\newtheorem{example}{Example}
\DeclareMathOperator*{\argminB}{argmin}   % Jan Hlavacek
\title{Variable Selection Methods for Multivariate, Functional, and Complex Biomedical Data in the AI Age}
\author{Marcos Matabuena$^{1}$ \\
    $^{1}$Universidade de Santiago de Compostela and Harvard University 
  \\
}
\date{\today}
\begin{document}
\maketitle
\begin{abstract}
Many problems within personalized medicine and digital health rely on the analysis of continuous-time functional biomarkers and other complex data structures emerging from high-resolution patient monitoring. In this context, this work proposes new optimization-based variable selection methods for multivariate, functional, and even more general outcomes in metrics spaces based on best-subset selection.
Our framework applies to several types of regression models, including linear, quantile, or non parametric additive models, and to a broad range of random responses, such as univariate, multivariate Euclidean data, functional, and even random graphs. Our analysis demonstrates that our proposed methodology outperforms state-of-the-art methods in accuracy and, especially, in speed—achieving several orders of magnitude improvement over competitors across various type of statistical responses as  the case of mathematical functions. While our framework is general and is not designed for a specific regression and  scientific problem, the article is self-contained and focuses on biomedical applications. In the clinical areas, serves as a valuable resource for professionals in biostatistics, statistics, and artificial intelligence interested in variable selection problem in this new technological AI-era.
\\ \textbf{Keywords}: Variable selection, multivariate data, complex statistical responses, digital health,  personalized  medicine.
\end{abstract}
\section{Introduction}
Recent technological advances have enabled the monitoring of biological systems at an unprecedented resolution, leading to the routine collection of large volumes of high-dimensional clinical data \cite{10.1371/journal.pbio.2001402}. A contemporany 
example
is the case of mobile phone data, which generates hundreds of observations per second related to body acceleration, allowing for quasi-continuous monitoring of an individual's physical activity patterns. However, despite opportunities to generate new clinical knowledge, data analytics for precision and digital health must account to new data structures \cite{kosorok2015adaptive, kosorok2019precision, tsiatis2019dynamic, matabuenacontributions}.
 A key challenge in this context is conducting reliable variable selection to enhance interpretability, assess clinical relevance, and safeguard analytical models against the curse of dimensionality \cite{Rahnenfuehrer2023}.

Traditionally, variable selection for multivariate response outcomes has focused on linear models within Euclidean spaces, primarily targeting conditional mean estimation. However, this approach diverges from the realities of modern healthcare data. Complex patient representations—such as functional data and graph-based representations—are increasingly common to capture higher-level 
abstractions of clinical patterns.

For example, continuous glucose monitoring (CGM) devices record glucose values quasi-continuously over time, enabling the derivation of multiple functional and non-functional summary measures (biomarkers) of glucose metabolism \cite{matabuena2024glucodensity}. These biomarkers are especially useful when patient data are collected in free-living conditions, where direct time-series analysis may not be advisable. As a result, the resulting biomarkers often take diverse structures and may be viewed as statistical objects in general metric spaces \((\Omega,d)\)\cite{matabuenacontributions}, such as the space of probability distributions.

\begin{figure}[H] \centering \includegraphics[width=0.8\textwidth]{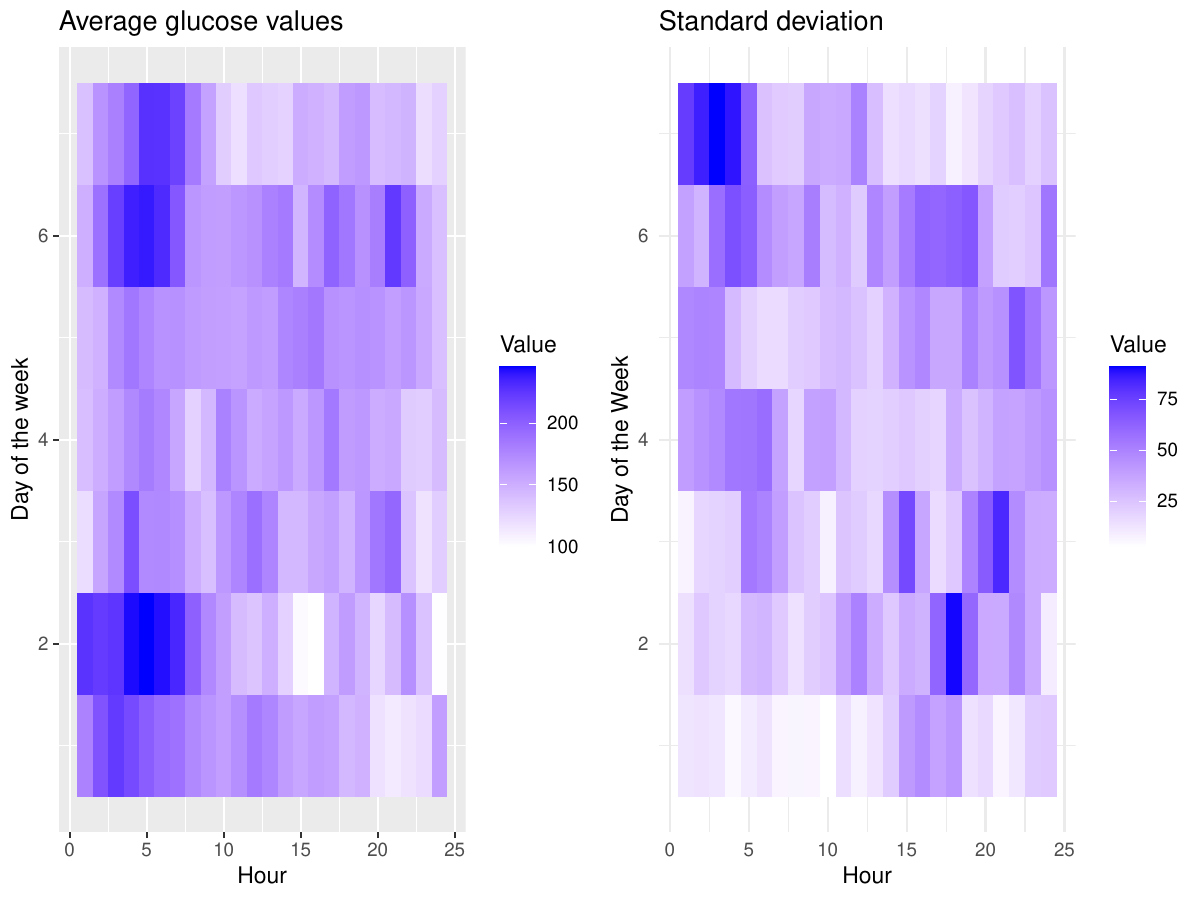} 
\caption{Variation in the mean and standard deviation of glucose values for a diabetic individual depending on the day of the week and time of day.} 
\label{fig:new}
 \end{figure}

Figure \ref{fig:new} illustrates how the mean and standard deviation of glucose concentration for a diabetic individual over a one-month period, across the day and beetween day of the week. The temporal pattern highlights the importance of examining changes at different temporal resolutions and in various time series characteristics. Therefore, consider temporal probability distributions as clinical outcomes captures more information about time series distributional patterns,  than focusing on simple summary statistics like the mean and standard deviation.

 Another critical challenge lies in aggregating and integrating heterogeneous information from various sources, such as genomics, wearables, and electronic health records. These new emerging multimodal structures  motivates the development of new models, in general situations that go beyond linear statistical methods and Euclidean data structures.

Given the absence of general variable selection methods for statistical responses defined in abstract spaces, and the limitations of existing methods primarily designed for linear regressions and Euclidean predictors, this paper proposes a novel statistical and computational algorithm for variable selection in regression models. To the best of our knowledge, the only existing general variable selection method for similar modeling purposes is  the Fréchet Ridge Selection Operator (\texttt{FRISO}) \cite{tu2020era}. Unlike \texttt{FRISO}, which relies on computationally intensive matrix computations and optimization steps, our algorithm employs a novel subgradient projection technique that significantly reduces computational time, allowing for application to large-scale datasets. \texttt{FRISO} is only suitable for datasets with a few hundred observations and a moderate number of predictors ($p$). With these limitations, these methods lack the practical utility necessary to derive new clinical insights in large-scale studies, such as the UK Biobank and All of Us Research Program.

The primary aim of this paper is to demonstrate the effectiveness and scalability of our novel variable selection framework, which is capable of handling datasets comprising millions of patients in biomedical applications. We ilustrate the practical utility of the model on clinical cases involving continuous patient monitoring through digital technologies. From a technical standpoint, we examine spaces of strong negative type, which can be embedded in a Hilbert space with  linear structure. Such spaces are especially relevant in modern healthcare, as they accommodate  random objects used in the clinical applications, such as Laplacian graphs or probability distributions within the $2$-Wasserstein space \cite{matabuenacontributions}. 

%These structures offer a sophisticated means of modeling various biomarkers and clinical challenges, enabling a novel and nuanced characterization of patient health and underlying biological processes.

Our variable selection framework is highly flexible, capable of integrating diverse information sources and accommodating a variety of models, including additive models and support vector machines. For example, our framework can seamlessly incorporate genomic data, wearable sensor data, and electronic health records to provide a holistic view of patient health. This adaptability is achieved by embedding various loss functions within a general variable selection formulation and applying penalties to address group variable constraints, supporting complex structures such as generalized additive models, distributional models \cite{rigby2005generalized} and functional-to-functional regression models.  Our framework has the potential to greatly advance variable selection across modern healthcare applications, driving forward precision medicine and public health

%We ensure computational efficiency through a novel, high-efficiency subgradient projection optimization step, enabling the variable selection process for millions of patients and varied statistical responses to complete in minutes

\subsection{Literature overview}
There is a large body of  literature on variable selection methods for univariate responses \cite{buhlmann2011statistics, bertsimas2016best, huang2012selective, zhang2012general, buhlmann2014high}. Many of these methods were created by identifying biological factors associated with developing and evolving diseases such as Single Nucleotide Polymorphisms  (SNPs) or genetic expressions \cite{buhlmann2014high, hastie2004efficient, he2011variable}. 
The least absolute shrinkage and selection operator (Lasso) \cite{tibshirani1996regression}, which consists of a linear model regularized via an  $\ell_1$ penalty in the objective, is admittedly the most popular method in the literature and in practice. However, its performance could be improved in settings with highly correlated predictors, where it acts more as a screening method than a variable selection one. Motivated by these shortcomings, several extensions or modifications have been proposed in the literature, such as the combination of the $\ell_1$ norm with the $\ell_2$ norm (elastic net) \cite{zou2005regularization}, non-convex penalties \cite{fan2001variable} (MCP and SCAD), efficient methods in the $\ell_0$ norm that from the theoretical and empirical point of view have properties that overcome the limitations mentioned above \cite{bertsimas2016best, hazimeh2020fast}. Significant efforts have been made to modify these algorithms to detect genetic structures with control of the false discovery rate \cite{benjamini1995controlling,su2017false, bogdan2015slope} in different settings, especially in genetic association studies \cite{brzyski2017controlling, sesia2019gene, sesia2020multi}. In addition, to overcome computational challenges, several greedy algorithms were proposed during this period  \cite{efron2004least, xiao2015convexlar, khan2007robust}. 

In a recent series of work for linear structures \cite{bertsimas2020sparse,bertsimas2021sparse}, we have proposed an efficient algorithm for regression problems with the $\ell_0$ and $\ell_2$ penalty that obtains good empirical performance and better phase transition theoretical properties than existing methods \cite{bertsimas2021sparse}. Moreover, their method  applies to arbitrary convex loss function $\ell(\cdot, \cdot )$ in both classification and regression problems \cite{bertsimas2020sparse,bertsimas2021sparse}.  
\paragraph*{Statistical modeling in general spaces}
\noindent	One of the most prominent applications of statistical modeling in metric spaces is in biomedical problems \cite{rodriguez2022contributions}. In personalized and digital medicine applications, it is increasingly common to measure patients' health conditions with complex statistical objects, such as curves and graphs, which allow recording patients' physiological functions and measuring the topological connectivity patterns of the brain at a high resolution level. For example, in recent work, the concept of "glucodensity"  \cite{doi:10.1177/0962280221998064} has been coined, which is a distributional representation of a patient's glucose profile that improves existing methodology in diabetes research \cite{matabuena2022kernel}. This representation is also helpful in obtaining better results with accelerometer data \cite{matabuena2021distributional,ghosal2021scalar,matabuena2022physical, ghosal2023multivariate}.
	
	\noindent From a methodological point of view, statistical regression analysis of response data in metric spaces is a novel research direction \cite{fan2021conditional,chen2021wasserstein,petersen2021wasserstein, zhou2021dynamic, dubey2022modeling, 10.3150/21-BEJ1410, 10.3150/21-BEJ1410, kurisumodel,chen2023sliced}. The first papers on hypothesis testing \cite{10.1214/20-AOP1504,dubey2019frechet,petersen2021wasserstein,fout2023fr}, variable selection \cite{tucker2021variable}, missing data \cite{matabuena2024personalized}, 
multilevel models \cite{matabuena2024multilevel,
bhattacharjee2023geodesic}, uncertainity quantification \cite{matabuena2024conformal,lugosi2024uncertainty} ,dimension-reduction \cite{zhang2022nonlinear},  semi-parametric regression models \cite{bhattacharjee2021single, ghosal2023predicting}, semi-supervised algorithms \cite{	qiu2024semisupervised}
 and non-parametric regression models \cite{schotz2021frechet,hanneke2022universally, bulte2023medoid, bhattacharjee2023nonlinear} have recently appeared.

\subsection{Applications}

The landscape of medical data collection has been dramatically reshaped by technological advancements, as evidenced by the increasing prevalence of high-frequency time-series data in medicine \cite{bunn2018current, knight2021mobile, dunn2018wearables, karczewski2018integrative, wu2022network}. A notable application of this technological evolution is in omics data technologies \cite{karczewski2018integrative, wu2022network}. Omics data encompasses high-throughput methods for analyzing various biological molecules—genes, proteins, and metabolites—to elucidate their roles and interactions in biological processes. The resulting data sets are vast and complex, necessitating computational techniques to uncover valuable insights into disease biology. Furthermore, the advent of biosensor technology has revolutionized the continuous measurement of physiological, biomechanical, and environmental variables \cite{bunn2018current, knight2021mobile, dunn2018wearables}. These devices, often wearable, are capable of monitoring an array of conditions, such as heart rate, temperature, and air quality, thereby enhancing the precision of predictive models in both precision and digital medicine \cite{kosorok2019precision}. Precision medicine tailors medical treatment to individual genetic, environmental, and lifestyle factors, while digital medicine employs digital technologies for disease monitoring, diagnosis, and treatment.

To fully leverage the breadth of data provided by these technologies, random objects are often defined within a metric space. These emerging data structures can include functional and distributional glucose profiles \cite{matabuena2021glucodensities, matabuena2021distributional}, brain connectivity graphs \cite{dubey19, dubeyrssb}, or multivariate data that simultaneously assess several correlated patient characteristics, such as in medical imaging or blood diagnostic tests \cite{angelopoulos2022image}.

\subsection{Specific Scientific Goals}
In medical data analysis, the selection of relevant variables is of paramount importance for creating precise, individualized models for disease diagnosis, predicting patient prognosis, and dynamically prescribing treatments \cite{tsiatis2019dynamic}. The complexities of medical data, which include high dimensionality, heterogeneity, and sparsity, pose significant challenges for statistical modeling. To address these challenges, we propose a novel variable selection methodology based on the sparse optimization framework developed by \cite{bertsimas2020sparse}. Our approach adapts this framework to complex statistical objects, which are increasingly prevalent in medical research \cite{matabuenacontributions}. We demonstrate the efficacy of our methodology through a detailed analysis of diverse information sources, as shown in Table 1. This analysis also highlights the potential improvement in health outcomes when considering complex random objects as a response variable, offering new insights into human health.

From a literature perspective, our framework builds on the well-established approach for variable selection in univariate responses, as proposed by Bertsimas et al. (2021) \cite{bertsimas2021sparse}, which combines $\ell_0$ and $\ell_2$ norm regularizations and shows promising results in computational and statistical efficiency compared to existing methods.

\begin{table}[ht!]
\centering
\renewcommand{\arraystretch}{1.5} % Adjust vertical spacing
\begin{tabularx}{\textwidth}{|X|X|X|}
\hline
\textbf{Source of Information} & \textbf{Dataset Description} & \textbf{Clinical Relevance} \\
\hline
\emph{Euclidean Multivariate Data} & Large cohort with over 58,000 patients and $27$ clinical predictors. All predictors are scalar and categorical variables. & Identify the most relevant factor influencing two diagnostic biomarkers of diabetes in a large patient cohort. \\

\hline
\emph{Bivariate Longitudinal Data} & Over 350 patients with 12 clinical predictors and six time points in the response. All predictors are scalar and categorical variables. & Identify the most relevant factor impacting the longitudinal evolution of glucose mean values and variability over time. \\
\hline
\emph{Probability Distribution} & Over 350 patients with 12 clinical predictors, represented over a quantile function defined in a grid of 300 percentiles. All predictors are scalar and categorical variables. & Identify factors influencing the long-term evolution of distributional glucose profiles, capturing key distributional aspects of glucose metabolism. \\
\hline
\emph{Multivariate Distributional Representation as a Response and Histogram Data as a Predictor} & Same dataset as in the previous example, with information from a CGM device over a 24-week (6-month) period. & Detect the most important parts of the histogram (relevant bins) to predict the temporal evolution of probability distribution in a long-term monitoring study within the context of a diabetes clinical trial. \\
\hline

\emph{Combination of Multiple Sources of Information} & Large cohort with over 5,000 patients and 35 clinical predictors. All predictors are scalar and categorical variables. & Predict an individual profile, simultaneously considering physical activity patterns, diabetes status, and the patient's diet. \\
\hline
\emph{Laplacian Graph} & Exact dataset as the previous example. & Identify the most relevant factor impacting a Laplacian graph constructed individually to measure inter-variability patterns. \\
\hline
\end{tabularx}
\caption{Examples of Different Sources of Information Used in This Paper Along with Their Applications}
\label{table:examples0}
\end{table}

\subsection{Paper contributions}
Our contributions can be summarized as follows:

\begin{enumerate}
    \item We propose a novel generalization of the variable selection strategy presented in \cite{bertsimas2020sparse}, tailored to handle complex statistical objects in healthcare applications. Our methods applies to multivariate data, with the ability to fit different regression base algorithms in each random response considered. This extension provides flexibility in selecting relevant variables and is a crucial step towards integrating various sources of medical information at both patient and hospital levels.

    \item We formally prove that our novel variable selection strategies generalize those recently proposed for metric space responses in the Global Fréchet regression model \cite{doi:10.1080/01621459.2021.1969240}. 

    \item We establish statistical consistency results for the new variable selection. 
    
    %procedure by leveraging its connection with the previous strategy proposed in \cite{doi:10.1080/01621459.2021.1969240}.

    \item We extend the model formulation proposed from \cite{bertsimas2020sparse} to handle block constraints in categorical and functional predictors, addressing a common issue in medical data not widely examinated for multivariate responses.

    \item We demonstrate the advantages of our proposed methodology over existing state-of-the-art algorithms in various scenarios, showcasing superior model performance and computational feasibility. Our \texttt{Julia} implementation, for example, outperforms existing methods in several orders of magnitude as in the case 
on multivariate Euclidean data and multivariate functions in problems that involve  millons of individuals.
\end{enumerate}

\subsection{Paper outline}

The structure of the paper is outlined below. Section \ref{sec:modelobasico} introduces the general formulation of the new variable selection framework. Section \ref{sec:res} focuses on various case studies, formulates models for each case, and illustrates them with a set of real data presented in Table~\ref{table:examples0}. Section \ref{sec:sim} provides a simulation study to validate the statistical and computational efficiency of our variable selection framework. Finally, Section \ref{sec:discus} discusses the results, models, and extensions of the methodology introduced here. In the Appendix, we introduce several model extensions and extra computation results to support the  advantagues of our proposal.

\section{Model formulation}
\label{sec:modelobasico}

Consider a random sample that is independent and identically distributed (i.i.d.) consisting of $n$ observations,
\[
\mathcal{D}_{n} = \{(X_i, Y_i) \mid i \in [n]\},
\]

\noindent where each pair $(X_i, Y_i) \in \mathcal{X} \times \mathcal{Y} = (\mathcal{X}_{1} \times \dots \times \mathcal{X}_{p}) \times (\mathcal{Y}_{1} \times \dots \times \mathcal{Y}_{m})$ is sampled independently from the same joint distribution $F$. For any integer $n$, we denote $[n] := \{1, \dots, n\}$. In our setting,  $p$ denotes the number of predictors and $m$ the dimension of random responses $Y$. 

For ease of exposition, we assume that $\mathcal{X} = \mathbb{R}^{p}$ and $\mathcal{Y} = \mathbb{R}^{m}$. However, in practice, our algorithm apply when $\mathcal{X}$ and $\mathcal{Y}$ are arbitrary separable Hilbert spaces. In this general case, a large class of linear and semiparametric regression models for a point $x \in \mathcal{X}$, denoted by $m(x) \in \mathcal{Y}$, can be rewritten as a linear combination of coefficients that depend on each individual predictor:
\[
m(x) = \sum_{k=1}^{K} \sum_{j=1}^{p} c_{ij} \phi_{ij}(x),
\]
\noindent where $\{c_{kj}\}_{k=1,j=1}^{K,p}$ are the regression model coefficients, and $\{\phi_{ij}\}_{i=1,j=1}^{K,p}$ are basis functions, as in the trivial case of linear models, additive generalized linear models, or functional regression models. In the latter case, for numerical and practical purposes, altought the prediction and responses are  infinite-dimensional nature, we obtain a high-level approximation by truncating the series to $K$ terms. This allows us to apply a finite-dimensional representation of the regression model parameters, denoted in the subsequent sections as $\beta$, in the variable selection process. 

Our practical goal is to estimate the vector parameter $\beta \in \mathbb{R}^{p \times m}$ of a regression function
\[
m(X, \beta) = \left(m_{1}(X, \beta_{\cdot 1}), \dots, m_{m}(X, \beta_{\cdot m})\right) \in \mathbb{R}^{m}.
\]
\noindent Here, we assume that the $t$-th column of $\beta$, denoted $\beta_{\cdot t}$, parameterizes the model for the $t$-th coordinate of $Y$.

To jointly estimate the $\beta$ parameter from the observations $\mathcal{D}_{n}$ and select the relevant variables from the input space $\mathcal{X}$, we consider the following empirical risk minimization problem:
\begin{equation} \label{eqn:generic.ss}
\begin{aligned} 
\min_{\substack{\beta \in \mathbb{R}^{p\times m} \\ s \in \{0,1\}^{p}}} \quad & \sum_{t=1}^{m} \sum_{i=1}^{n} \ell_{t}\left(Y_{it},\, m_{t}\left(X_{i}, \beta_{\cdot t}\right)\right) + \frac{1}{2\gamma} \sum_{t=1}^{m} \|\beta_{\cdot t}\|_{2}^{2} \\
\text{subject to} \quad & \sum_{j=1}^{p} s_{j} \leq k, \\
& \beta_{jt} = 0 \quad \text{if } s_j = 0, \quad \forall j \in [p], \, t \in [m].
\end{aligned}
\end{equation}

\noindent Here, we associate the prediction of the $t$-th coordinate of $Y$ with an additive convex loss function $\ell_t(\cdot,\cdot)$ ---see Table~\ref{tab:loss.example} for some examples--- and introduce a binary decision variable $s \in \{0,1\}^p$ to encode the subset of variables selected. The linear constraint $\sum_{j=1}^{p} s_{j} \leq k$ ensures that we select no more than $k$ variables out of $p$. The logical constraints ensure that all coefficients associated with variable $j$ ($\beta_{jt}$ for $t\in [m]$) are set to zero when this feature is not selected ($s_j=0$).

The parameter $\gamma \geq 0$ controls the strength of the ridge regularization. Indeed, earlier work \cite{hastie2017extended} demonstrates that having an explicit constraint on the number of variables only (best subset selection formulation) performs poorly when the signal-to-noise ratio (SNR) is low, but incorporating an additional ridge regularization can effectively increase feature selection and predictive accuracy in these settings \cite{bertsimas2020sparse}.
\begin{remark}
By construction, for each $t \in [m]$, we can employ a distinct loss function. In a general formulation of our algorithm, we consider $\mathbb{H}_{Y} = \mathbb{H}_{Y_{1}} \times \dots \times \mathbb{H}_{Y_{m}}$, which represents the Cartesian product of $m$ Hilbert Spaces. Each dimension has the flexibility to be a different Hilbert space, allowing for rich and diverse modeling possibilities for different sources of information such as genetic, wereable, and electronic records data.
\end{remark}

\begin{table}[h]
    \centering
    \begin{tabular}{lll}
        \toprule
        Loss Function & $\ell(y,u)$ & $\hat{\ell}(y,a) = \max_{u} \: ua - \ell(u,a)$ \\
        \midrule
        Ordinary Least Squares (OLS) & $\tfrac{1}{2} (y - u)^2$ & $y a + \tfrac{1}{2} a^2$ \\[1ex]
        Pinball Loss & $\max\{q (y - u),\ (1 - q)(u - y)\}$ & $\begin{cases} 
            y a & \text{if } -q \leq a \leq 1 - q, \\ 
            +\infty & \text{otherwise},
        \end{cases}$ \\
        Logistic Loss & $\log \left(1 + e^{-y u}\right)$ & $-H(-y\alpha)$, for $y\alpha \in [-1, 0]$ \\
        \midrule
    \end{tabular}
    \caption{Examples of loss functions and their Fenchel conjugates. Here, $H(x) = -x\log x - (1 - x)\log(1 - x)$.}
    \label{tab:loss.example}
\end{table}

To solve the optimization problem \eqref{eqn:generic.ss} efficiently, we rely on a saddle-point formulation, which can be obtained by dualizing the minimization problem with respect to $\beta$ for any subset of predictors $s$.

\begin{theorem} \label{thm:saddle.reformulation}
For any convex loss
functions $\ell_t$, and assume additive linear structure across different loss function $\ell_t, t\in [m],$ the optimization problem \eqref{eqn:generic.ss} is equivalent to  
\begin{align}\label{eqn:boundary}   
 \min_{s\in \{0,1\}^{p}: \sum_j s_j \leq k} \quad \max_{\alpha\in \mathbb{R}^{n \times m}} \quad f\left(\alpha,s\right):= 
\left(-\sum_{t=1}^{m}\sum_{i=1}^{n}\hat{\ell}_t\left(Y_{it},\alpha_{it}\right)-\dfrac{\gamma}{2} \sum_{t=1}^{m}\sum_{j=1}^{p}s_{j}\alpha^{\top}_{{\cdot}t}X_{j}X^{\top}_{j}\alpha_{{\cdot}t}\right),
 \end{align}   
\noindent where $\hat{\ell}(y,a):= \max_{u\in \mathbb{R}} u a-\ell(y,u)$ is a convex function known as the Fenchel conjugate of $\ell$ \cite{bauschke2012fenchel}.In particular, the function $f$ is continuous, linear in $s$, and concave in $\alpha$.   
\end{theorem}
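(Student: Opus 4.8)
The plan is to fix the binary vector $s$ and dualize the inner minimization over $\beta$, exploiting the assumed additive linear structure so that each prediction $m_t(X_i,\beta_{\cdot t})$ is a linear functional of $\beta_{\cdot t}$. The logical constraints force $\beta_{jt}=0$ whenever $s_j=0$, so only the columns $X_j$ with $s_j=1$ enter the model, and with $s$ held fixed \eqref{eqn:generic.ss} is a convex program in $\beta$. First I would introduce auxiliary variables $u_{it}$ for the predictions $m_t(X_i,\beta_{\cdot t})$, rewriting the objective as $\sum_{t,i}\ell_t(Y_{it},u_{it})+\tfrac{1}{2\gamma}\sum_t\|\beta_{\cdot t}\|_2^2$ subject to the affine coupling constraints $u_{it}=X_i^\top\beta_{\cdot t}$ (on the support of $s$), and attach a multiplier $\alpha_{it}$ to each, obtaining the Lagrangian $L(\beta,u,\alpha)=\sum_{t,i}\ell_t(Y_{it},u_{it})+\tfrac{1}{2\gamma}\sum_t\|\beta_{\cdot t}\|_2^2+\sum_{t,i}\alpha_{it}\bigl(X_i^\top\beta_{\cdot t}-u_{it}\bigr)$.

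The heart of the argument is the interchange $\inf_{\beta,u}\sup_\alpha L=\sup_\alpha\inf_{\beta,u}L$. Since $L$ is jointly convex in $(\beta,u)$, affine (hence concave) in $\alpha$, and the coupling constraints are affine, strong duality holds; I would invoke Fenchel--Rockafellar duality, or Sion's minimax theorem after noting that for $\gamma>0$ the ridge term makes the relevant sublevel sets in $\beta$ compact. Once the order is swapped the inner infimum decouples: minimizing over each $u_{it}$ gives $\inf_{u_{it}}\bigl(\ell_t(Y_{it},u_{it})-\alpha_{it}u_{it}\bigr)=-\hat\ell_t(Y_{it},\alpha_{it})$ by definition of the Fenchel conjugate, while minimizing the quadratic $\tfrac{1}{2\gamma}\|\beta_{\cdot t}\|_2^2+\alpha_{\cdot t}^\top\mathbf{X}\beta_{\cdot t}$ over the free coordinates of $\beta_{\cdot t}$ is solved in closed form by $\beta_{\cdot t}^\star=-\gamma\,\mathbf{X}^\top\alpha_{\cdot t}$ (restricted to the support of $s$), with optimal value $-\tfrac{\gamma}{2}\sum_{j:\,s_j=1}(X_j^\top\alpha_{\cdot t})^2=-\tfrac{\gamma}{2}\sum_{j=1}^p s_j\,\alpha_{\cdot t}^\top X_jX_j^\top\alpha_{\cdot t}$. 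Summing over $t$ and $i$ reproduces exactly $f(\alpha,s)$, so the inner value of \eqref{eqn:generic.ss} at a fixed $s$ equals $\max_\alpha f(\alpha,s)$, and taking $\min_s$ yields \eqref{eqn:boundary}.

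Finally I would check the structural claims. Linearity in $s$ is immediate, since $s$ enters $f$ only through $-\tfrac{\gamma}{2}\sum_{t,j}s_j\,\alpha_{\cdot t}^\top X_jX_j^\top\alpha_{\cdot t}$, which is affine in $s$, and the conjugate-loss term is free of $s$. Concavity in $\alpha$ follows because each $\hat\ell_t(Y_{it},\cdot)$ is convex as a supremum of affine functions, so $-\hat\ell_t$ is concave, and because $\alpha_{\cdot t}^\top X_jX_j^\top\alpha_{\cdot t}=(X_j^\top\alpha_{\cdot t})^2$ is convex with nonnegative weights $s_j\gamma\ge 0$, so its negation is concave; a sum of concave functions is concave. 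Upper semicontinuity, and genuine continuity on the effective domain $\{\alpha:\hat\ell_t(Y_{it},\alpha_{it})<\infty\}$, comes from the same conjugate representation. The one point requiring care — and the main obstacle — is justifying the min--max interchange when the loss conjugates are extended-real-valued (as for the pinball loss) or when $\gamma=0$: in the first case one restricts to the effective domain, where the argument above applies verbatim; in the second, $\inf_\beta$ forces $\mathbf{X}^\top\alpha_{\cdot t}=0$ on the support, which is precisely the condition the (now degenerate) quadratic term encodes, so the identity still holds.
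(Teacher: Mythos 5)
Your proof is correct and follows exactly the Fenchel/Lagrangian dualization that the paper relies on (the paper gives no explicit proof of this theorem, deferring implicitly to the cited saddle-point reformulations of Pilanci et al.\ and Bertsimas et al.); your closed-form minimizations over $u$ and over the supported coordinates of $\beta_{\cdot t}$ reproduce $f(\alpha,s)$ term by term and are consistent with the OLS special case $\alpha^{\star}_{\cdot t}(s) = -(I_{n}+\gamma X_{s}X_{s}^{\top})^{-1}Y_{t}$ stated immediately after the theorem. The only slip is peripheral: under the paper's parametrization $\tfrac{1}{2\gamma}\|\beta_{\cdot t}\|_2^2$, the case $\gamma=0$ corresponds to infinite (not absent) regularization, so your closing remark that it forces $X_j^{\top}\alpha_{\cdot t}=0$ on the support actually describes the opposite limit $\gamma\to\infty$.
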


In the special case of ordinary least square, i.e., $\ell_t(y,u) = \dfrac{1}{2}(y-u)^2$, the function $f$ is a quadratic function in $\alpha$,
%  \begin{align*}
%      f(\alpha, s)= 
% - \sum_{t=1}^{m} \sum_{i=1}^{n} \left( Y_{it} \alpha_{it} + \frac{1}{2} \alpha_{it}^2 \right) 
% - \dfrac{\gamma}{2} \sum_{t=1}^{m} \sum_{j=1}^{p} s_{j} \alpha^{\top}_{{\cdot}t} X_{j} X^{\top}_{j} \alpha_{{\cdot}t}.
%  \end{align*}
so the inner maximization problem (with respect to $\alpha$) can be solved in closed form. Specifically, for each $t=1,\dots,m$, the maximum is attained at $\alpha^{\star}_{\cdot t}(s) = -(I_{n}+\gamma X_{s}X_{s}^{\top})^{-1} Y_{t}$, 
where $X_{s}X_{s}^{\top}$ concisely denote $\sum_{j=1}^{p} s_{j} X_{j}X_{j}^{\top}$, and the objective value is
\begin{align*}
\frac{1}{2}\sum_{t=1}^{m}  Y_{t}^{\top}(I_{n}+\gamma X_{s}X^{\top}_{s})^{-1}Y_{t}.
\end{align*}
% \subsubsection{Cutting-plane algorithm}
% Denoting
% \begin{align}\label{eqn:boundary}
% c(s) := \max_{\alpha \in \mathbb{R}^{n\times m}} f(\alpha,s), 
% \end{align}
% which is a convex function in $s$; the cutting-plane algorithm solves the convex integer optimization problem 
% \begin{align*}
% \min_{s \in \{0,1\}^p} ~c(s) \mbox{  s.t.  } s^\top  \textbf{e} \leqslant k,
% \end{align*} by iteratively tightening a piece-wise linear lower approximation of $c$. Pseudo-code is given in Algorithm \ref{OA} (p. \pageref{OA}). Proof of termination and details on implementation can be found in \cite{bertsimas2016cio} for regression and \cite{2017sparseclass} for classification. This outer-approximation scheme was originally proposed by \cite{duran1986outer} for general nonlinear mixed-integer optimization problems.
% \begin{algorithm*}
% \caption{Outer-approximation algorithm}
% \label{OA}
% \begin{algorithmic}
% \REQUIRE $X \in \mathbb{R}^{n \times p}$, $Y \in \mathbb{R}^{n\times m}$, $k \in \lbrace 1,...,p \rbrace$ 
% \STATE $t \leftarrow 1 $
% \REPEAT
% \STATE $s^{t+1},\eta^{t+1} \leftarrow \text{argmin}_{s \in \{0, 1\}^p,\eta} \left\lbrace \eta \: : \sum_{j=1}^p s_j \leqslant k, \; \eta \geqslant c(s^i) + \nabla c(s^i)^\top  (s-s^i),~  \forall i =1,\dots , t \right\rbrace $ 
% \STATE $t \leftarrow t+1 $
% \UNTIL{$\eta^t < c(s^t) - \varepsilon$}
% \RETURN $s^t$ 
% \end{algorithmic}
% \end{algorithm*}

\subsection{Boolean relaxation and efficient algorithms}

As often in discrete optimization, it is natural to consider the Boolean relaxation of problem \eqref{eqn:boundary}
\begin{equation}
\label{eqn:relax}
\min_{s\in [ 0, 1 ]^p \: : \: \textbf{e}^\top   s \leqslant k} \max_{\alpha \in \mathbb{R}^{n\times m}} f(\alpha, s),
\end{equation}
and study its tightness, as done by \cite{pilanci2015sparse}. 

\subsubsection{Tightness result}
The above problem is recognized as a convex/concave saddle point problem. According to Sion's minimax theorem \citep{sion1958general}, the minimization and maximization in \eqref{eqn:relax} can be interchanged. Hence, saddle point solutions $(\bar \alpha, \bar s)$ of \eqref{eqn:relax} should satisfy 
%\begin{align*}
$$\bar \alpha \in \arg \max_{\alpha \in \mathbb{R}^{n\times m}} f(\alpha, \bar s), ~~~
\bar s  \in \arg \min_{s \in [0,1]^p} \,f(\bar \alpha, s) \mbox{ s.t. } ~s^\top   \textbf{e} \leqslant k .$$
%\end{align*}
Since $f$ is a linear function of $s$, a minimizer of $f(\bar \alpha, s)$ can be constructed easily by selecting the $k$ smallest components of the vector $(-\tfrac{\gamma}{2} \bar \alpha^\top   X_j X_j^\top   \bar \alpha)_{j=1,...,p}$. If those $k$ smallest components are unique, the so constructed binary vector must be equal to $\bar s$ and hence the relaxation \eqref{eqn:relax} is tight. In fact, the previous condition is necessary and sufficient as proven by \cite{pilanci2015sparse}:
\begin{thm}{\citep[Proposition 1]{pilanci2015sparse}} \ref{eqn:boundary}
The Boolean relaxation \eqref{eqn:relax} is tight if and only if there exists a saddle point $(\bar \alpha, \bar s)$ such that the vector $\bar \beta := (\bar \alpha^\top   X_j X_j^\top   \bar \alpha)_{j=1,...,p}$ has unambiguously defined $k$ largest components, i.e., there exists $\lambda \in \mathbb{R}$ such that $\bar \beta_{[1]} \geqslant \cdots \geqslant \bar \beta_{[k]} > \lambda > \bar \beta_{[k+1]} \geqslant \cdots \geqslant \bar \beta_{[p]}$.
\end{thm}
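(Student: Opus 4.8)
The plan is to compare the optimal values of the two problems and to read off tightness from the structure of the inner, linear-in-$s$, subproblem. Write $P := \{s\in[0,1]^p : \mathbf{e}^\top s \le k\}$ and $g(s) := \max_{\alpha\in\mathbb{R}^{n\times m}} f(\alpha,s)$; since $f(\alpha,\cdot)$ is linear, $g$ is convex in $s$, and in the OLS special case treated above $f(\cdot,s)$ is strongly concave, so the inner maximum is attained at the unique point $\alpha^\star(s) = -(I_n+\gamma X_sX_s^\top)^{-1}Y$ (columnwise) and $g$ is continuous; I would run the argument in this setting, the general convex-loss case being identical once the inner supremum is attained (working with subgradients of $g$ rather than $\nabla g$). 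Because $\{0,1\}^p\cap\{\mathbf{e}^\top s\le k\}\subseteq P$, the relaxed value $v_{\mathrm{rel}} := \min_{s\in P} g(s)$ never exceeds the Boolean value $v_{\mathrm{bin}}$, and ``tight'' means $v_{\mathrm{rel}} = v_{\mathrm{bin}}$; by Sion's minimax theorem (already invoked) $\min$ and $\max$ may be swapped, so $v_{\mathrm{rel}}$ is the common value taken at every saddle point.

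For the ``if'' direction, let $(\bar\alpha,\bar s)$ be a saddle point with $\bar\beta := (\bar\alpha^\top X_jX_j^\top\bar\alpha)_j$ having unambiguous $k$ largest components, i.e.\ $\bar\beta_{[1]}\ge\cdots\ge\bar\beta_{[k]}>\bar\beta_{[k+1]}\ge\cdots\ge\bar\beta_{[p]}$. The saddle condition forces $\bar s$ to minimise $s\mapsto f(\bar\alpha,s) = -\sum_{t,i}\hat\ell_t(Y_{it},\bar\alpha_{it}) - \tfrac{\gamma}{2}\bar\beta^\top s$ over $P$, which is the linear program of maximising $\bar\beta^\top s$ over $P$. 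Since the $k$ largest entries of $\bar\beta$ are strictly separated, this LP has a unique optimiser, namely the indicator of those $k$ coordinates, which is a $0/1$ vector; hence $\bar s\in\{0,1\}^p$. Then $v_{\mathrm{rel}} = f(\bar\alpha,\bar s) = g(\bar s)\ge v_{\mathrm{bin}}\ge v_{\mathrm{rel}}$, so all inequalities are equalities and the relaxation is tight.

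For the ``only if'' direction, assume tightness and fix a binary $s^\star$ attaining $v_{\mathrm{rel}} = \min_P g$. Set $\bar\alpha := \alpha^\star(s^\star)$, so $\bar\alpha$ already maximises $f(\cdot,s^\star)$; to conclude that $(\bar\alpha,s^\star)$ is a saddle point I must check that $s^\star$ minimises $f(\bar\alpha,\cdot)$ over $P$. Here Danskin's theorem gives $\nabla g(s^\star) = \nabla_s f(\bar\alpha,s^\star) = -\tfrac{\gamma}{2}\bar\beta$ with $\bar\beta = (\bar\alpha^\top X_jX_j^\top\bar\alpha)_j$, and first-order optimality of the global minimiser $s^\star$ for the convex program $\min_P g$ reads $\bar\beta^\top(s - s^\star)\le 0$ for all $s\in P$, which is exactly the statement that $s^\star$ maximises $\bar\beta^\top s$ over $P$, hence minimises $f(\bar\alpha,\cdot)$ over $P$. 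Thus $(\bar\alpha,s^\star)$ is a saddle point, and the defining LP already yields $|\operatorname{supp}(s^\star)|\le k$ with $\bar\beta_j\ge\bar\beta_{j'}$ whenever $s^\star_j=1$ and $s^\star_{j'}=0$. The only remaining obstruction to the $k$ largest components being unambiguous is a boundary tie $\bar\beta_{[k]} = \bar\beta_{[k+1]}$, and ruling this out is the crux of the proof: the plan is to show that if every binary minimiser of $g$ over $P$ carried such a tie, then the optimal face of $\min_P g$ would be positive-dimensional, contradicting uniqueness of the minimiser (or, in its absence, a mild non-degeneracy hypothesis on the $X_j$); modulo this point, selecting a tie-free binary minimiser produces the required saddle point and closes the equivalence. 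I expect this last step — converting tightness into the \emph{strict} gap rather than just a weak ordering — to be the main difficulty.
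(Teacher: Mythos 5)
The paper does not actually prove this statement: it is imported verbatim from Pilanci et al.\ (their Proposition 1), and the only argument the paper supplies is the two-sentence sketch of the ``if'' direction immediately preceding the theorem. Your ``if'' direction is a correct and complete version of exactly that sketch: at a saddle point $\bar s$ must maximise the linear functional $\bar\beta^\top s$ over the capped box, and strict separation $\bar\beta_{[k]}>\lambda>\bar\beta_{[k+1]}\ge 0$ (together with $\bar\beta_j=\|X_j^\top\bar\alpha\|^2\ge0$, hence $\bar\beta_{[k]}>0$) makes the optimiser of that linear program the unique top-$k$ indicator, so $\bar s$ is binary and the relaxed and Boolean values coincide. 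No complaints there, and restricting to the OLS/strongly-concave case so that the inner maximum is attained and $g$ is differentiable is a fair reading of what the cited result actually covers.

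The ``only if'' direction is where the genuine gap sits; you have located it honestly, but your proposed patch does not close it. First, ``uniqueness of the minimiser'' of $\min_P g$ is not part of the hypothesis if ``tight'' means equality of optimal values, so there is nothing to contradict — and under that reading the direction is actually false as stated: take all columns $X_j$ identical and nonzero. Then $g(s)$ depends on $s$ only through $\mathbf{e}^\top s$ and is minimised on the whole face $\{\mathbf{e}^\top s=k\}$, which contains binary points, so the relaxed and Boolean values agree; yet every saddle point has $\bar\beta_1=\dots=\bar\beta_p$, so no saddle point has unambiguously defined top-$k$ components. The equivalence only survives if ``tight'' is read in the stronger sense implicit in the source (the relaxation has a unique optimal $s$, necessarily binary) or under a genericity condition on the $X_j$. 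Second, under that stronger reading the clean way to finish is not dimension counting on ``the optimal face'' but the product structure of saddle sets: for a convex--concave $f$ the saddle points form $A^\star\times S^\star$ with $S^\star=\arg\min_{s\in P}g(s)$, so a boundary tie $\bar\beta_{[k]}=\bar\beta_{[k+1]}$ at your saddle point $(\bar\alpha,s^\star)$ yields, by swapping the tied coordinates, a second minimiser of $f(\bar\alpha,\cdot)$ over $P$, hence a second element of $S^\star$, hence by convexity of $S^\star$ non-binary optimal solutions — contradicting uniqueness. You should also flag that your Danskin step silently assumes $g$ is differentiable at $s^\star$ (uniqueness of the inner maximiser), which holds for OLS but not for arbitrary convex losses such as the pinball loss.
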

This uniqueness condition in Theorem \ref{eqn:boundary} is frequently met in real-world scenarios. For example, it is satisfied with high probability when the covariates $X_j$ are independent, as shown in \citep[Theorem~2]{pilanci2015sparse}. In other words, randomness simplifies the complexity of the problem. These findings have had a significant practical impact, driving the development of convex proxy-based heuristics like Lasso. As a result, efficient algorithms can be devised to solve the saddle point problem \eqref{eqn:relax} without the need for complex discrete optimization methods.
\subsubsection{Dual sub-gradient algorithm}
In this section, we propose and describe an algorithm for solving problem \eqref{eqn:relax} efficiently. Our algorithm implemented in \verb|Julia| is fast and scales to data sets with $n,p$ in the $100,000$s in few minutes.  

For a given $s$, maximizing $f$ over $\alpha$ cannot be done analytically, with the noteworthy exception of ordinary least squares, whereas minimizing over $s$ for a fixed $\alpha$ reduces to sorting the components of $(-\alpha^\top   X_j X_j^\top   \alpha)_{j=1,...,p}$ and selecting the $k$ smallest. We take advantage of this asymmetry by proposing a dual projected sub-gradient algorithm with constant step-size, as described in pseudo-code in Algorithm \ref{subgradient}. $\delta$ denotes the step size in the gradient update and $\mathcal{P}$ the projection operator over the domain of $f$. { At each iteration, the algorithm updates the support $s$ by minimizing $f(\alpha, s)$ with respect to $s$, $\alpha$ being fixed. Then, the variable $\alpha$ is updated by performing one step of projected sub-gradient ascent with constant step size $\delta$.} The denomination "sub-gradient" comes from the fact that at each iteration $\nabla_\alpha f(\alpha^T, s^T)$ is a sub-gradient to the function $\alpha \mapsto \min_s f(\alpha, s)$ at $\alpha = \alpha^T$. 
 
{ In terms of computational cost, updating $\alpha$ requires $O\left(n \|s \|_0 \right)$ operations for computing the sub-gradient plus at most $O\left(n \right)$ operations for the projection on the feasible domain. The most time-consuming step in Algorithm \ref{subgradient} is updating $s$ which requires on average $O\left(n p + p \log p \right)$ operations.}
 
\begin{algorithm*}
\caption{Dual sub-gradient algorithm}
\label{subgradient}
\begin{algorithmic}
\State $s^0, \alpha^0 \leftarrow$ Initial solution
\State $T = 0$
\Repeat
    \State $s^{T+1} \in \text{argmin}_{s} f(\alpha^{T}, s)$
    \State $\alpha^{T+1} = \mathcal{P} \left( \alpha^T + \delta \nabla_\alpha f(\alpha^T, s^T) \right)$
    \State $T = T + 1$
\Until{Stop criterion}
\State $\hat \alpha_T = \frac{1}{T} \sum_t \alpha^t$
\State $\hat s = \text{argmin}_{s} f(\hat \alpha_T, s)$
\end{algorithmic}
\end{algorithm*}

\subsection{Extension to Group Variable Selection}

In some specific use cases, variable selection needs to respect certain block constraints between variables, e.g., when using one-hot encoding of categorical variables with more than two levels or in the context of nonlinear additive models. In this section, we describe how our model can be adapted to account for such requirements.

Let us assume that the set of input variables $[p]$ is partitioned into $q$ disjoint groups, $\mathcal{S}_{1}, \dots, \mathcal{S}_{q}$. In this context, the binary variable $s$ is now indexed by $u \in [q]$ and indicates whether group $u$ is selected. The optimization problem for the joint group variable selection and model estimation can be written as
\begin{align}\label{eqn:group}
\min_{\substack{\beta \in \mathbb{R}^{p\times m} \\ s \in \{0,1\}^{q}}} \quad & \sum_{t=1}^{m} \sum_{i=1}^{n} \ell_{t}\left(Y_{it},\, m_{t}\left(X_{i}, \beta_{\cdot t}\right)\right) + \frac{1}{2\gamma} \sum_{t=1}^{m} \|\beta_{\cdot t}\|_{2}^{2} \\
\text{subject to} \quad & \sum_{u=1}^{q} s_{u} \leq k, \\
& \beta_{jt} = 0 \quad \text{if } s_u = 0, \quad \forall u \in [q], \, j \in \mathcal{S}_u, \, t \in [m].
\end{align}
This is equivalent to \eqref{eqn:generic.ss} in the case when $q = p$ and $\mathcal{S}_u = \{ u \}$ for all $u \in [p]$.

\begin{remark}
Estimating the conditional mean operator $\mathbb{E}(Y \mid X) = f(X)$ without assuming specific functional shape constraints is challenging due to the curse of dimensionality. One effective strategy to address this issue is to constrain the functional form of the operator to be additive, though not necessarily linear, as $f(X) = \sum_{j=1}^{p} f_{j}(X^{j})$, where each $f_j(\cdot)$ denotes a non-linear regression function for each covariate. A common approach is to select $f_{j}$ for all $j \in \{1,\dots,p \}$ from a smooth functional class of splines $\mathcal{C}_{j}(\mathcal{R})$, i.e., $\mathcal{C}_j = \{ f \in  L^{2}(\mathcal{R}) \mid f \text{ is twice continuously differentiable over } \mathcal{R} \}$, where $\mathcal{R}$ is the domain defining the functional space. Given observations $(X_{i}, Y_{i})$, $i \in [n]$, the additive model $f(X)$ can be estimated by solving the following optimization problem:
\begin{equation}
\min_{f \in \mathcal{C}_{j}} \sum_{i=1}^{n} \left( Y_{i} - \sum_{j=1}^{p} f_{j}(X^{j}_{i}) \right)^2 + \lambda \sum_{j=1}^{p} \operatorname{Pen}(f_{j}),
\end{equation}
\noindent where $\operatorname{Pen}(f_{j})$ is a roughness penalty that ensures the smoothness of the function $f_{j}$. Given the linear representation of splines basis, this approach aligns with our model formulation, as spline-based optimization can be reinterpreted as a linear optimization problem, matching the group lasso formulation.
\end{remark}

From a practical perspective, to address the group $\ell_{0}$ constraints introduced in model \eqref{eqn:generic.ss}, and assuming that $[p]$ is partitioned into $q$ disjoint groups, $\mathcal{S}_{1}, \dots, \mathcal{S}_{q}$, we can leverage the original implementation of the dual sub-gradient algorithm from model \eqref{eqn:generic.ss}. This implementation is specifically designed to handle group constraints for multivariate response restrictions on the $\boldsymbol{\beta}$ coefficients, in the same spirit as the group model \eqref{eqn:group}. However, in our specific scenario, the decision unit is the group index $\mathcal{S}_{j}$, where $j \in [q]$, rather than the individual variable index $j \in [p]$. From a computational perspective, we only need to map the specific group index in the original implementation to the corresponding index of  variables.

\subsection{Variable Selection in Metric Space Responses}
In this section, we introduce a novel mathematical connection between our variable selection framework and existing methods for variable selection in linear regression models on metric spaces \cite{petersen2019frechet}, specifically the individual ridge regression algorithm introduced in \cite{tucker2021variable}. Technically, our method is more comprehensive; it not only encompasses the aforementioned algorithm as a special case but also offers advantages in terms of computational efficiency. Furthermore, our approach can be seamlessly integrated with other techniques, such as stability selection, which helps in detecting the level of sparsity in the variable selection problem. Before delving into the specifics of our methodology, we provide some essential mathematical background on regression modeling in metric spaces.

Let $(X,Y)\in \mathcal{X} \times \mathcal{Y}$ be a pair of random variables that play the role of the predictor and response variable in a regression model.  We assume that	$\mathcal{X}= \mathbb{R}^{p}$ and $\mathcal{Y}$ is a  separable metric space equipped with a distance $d_1$.  We assume that there exists $y\in \mathcal{Y}$ such that $\mathbb{E}(d^{2}(Y,y)) < \infty$. The regression function $m$ is defined as
	
	\begin{equation}
	m(x)= \argminB_{y\in \mathcal{Y}} \mathbb{E}(d^{2}(Y,y)|X=x),
	\end{equation}
	
	\noindent	where $x\in \mathbb{R}^{p}$.  In other words, $m$ is the conditional Fréchet mean \cite{petersen2019frechet}. For simplicity, we assume that the minimum in $(1)$ is achieved for each $x$ and moreover, the conditional Fréchet mean of $Y$ given $X=x$, is unique. We note here that one may also consider the conditional Fréchet median obtained by $\argminB_{y\in \mathcal{Y}}\mathbb{E}(d(Y,y)|X=x).$ However, this is a special case of our setup obtained by replacing the metric $d$, by $\sqrt{d}$ (which is also a metric).

\subsubsection{Global Fréchet Model}
Recent advances in object-oriented data analysis \cite{marron2021object}, particularly in handling complex statistical objects in non-standard spaces like graphs and probability distributions, have significant implications in medical research \cite{marron2014overview, huckemann2021data}. Here, we focus on the global Fréchet regression model, an extension of the linear regression model for modeling responses in separable metric spaces \cite{petersen2019frechet}.

 Let $(X,Y) \in \mathcal{X} \times \mathcal{Y}$ be a bivariate random variable, where $X \in \mathcal{X} = \mathbb{R}^{p}$, and $Y \in \mathcal{Y}$ is a separable metric space. For a new point $x \in \mathbb{R}^{p}$, we assume that the functional form of Frechet mean $m(\cdot)$  is obtained  by solving the following optimization problem in metric space:
\begin{equation*}
	m(x) =  \argminB_{y \in \mathcal{Y}} \mathbb{E}\left[\omega(x,X) d^{2}(Y,y) \right],
\end{equation*}
\noindent where $\Sigma = \operatorname{Cov}(X,X)$, $\mu = \mathbb{E}(X)$, $\omega(x,X) = \left[1 + (X-\mu)\Sigma^{-1}(x-\mu)\right]$.

To estimate the conditional mean function $m(x)$ from a random sample $\mathcal{D}_n = \{(X_i, Y_i)\}_{i=1}^n$, we solve the empirical counterpart problem:
\begin{equation}
	\widehat{m}(x) = \argminB_{y \in \mathcal{Y}} \frac{1}{n} \sum_{i=1}^{n} [\omega_{in}(x) d^{2}(y, Y_i)],
\end{equation}
\noindent where $\omega_{in}(x) = \left[ 1 + (X_{i} - \overline{X})\widetilde{\Sigma}^{-1}(x - \overline{X}) \right]$, with $\overline{X} = \frac{1}{n} \sum_{i=1}^{n} X_i$, and $\widetilde{\Sigma} = \frac{1}{n-1} \sum_{i=1}^{n} (X_i - \overline{X})(X_i - \overline{X})^\intercal$.

\subsubsection{Variable Selection in Metric Spaces with Penalized Ridge Algorithm}

Following \cite{tu2020era},
we now extend our discussion to introduce a variable selection technique for the global Fréchet regression, incorporating the concept of individually penalized ridge Fréchet regression. By introducing $\boldsymbol{v}=\lambda^{-1}$ in conjunction with the earlier formulation, we aim to optimize the following expression:
\begin{equation}\label{eqn:friso}
\widehat{R}_{\oplus}(\mathbf{x} ; \lambda^{-1})=\min_{y \in \mathcal{Y}} \sum_{i=1}^{n} \left\{1+(x-\overline{X})^{\top} \operatorname{diag}(\sqrt{\lambda})\left[\operatorname{diag}(\sqrt{\lambda}) \widetilde{\Sigma} \operatorname{diag}(\sqrt{\lambda})+\mathbf{I}\right]^{-1} \operatorname{diag}(\sqrt{\lambda})(X_{i}-\overline{X})\right\} d^{2}(Y_{i},y),
\end{equation}
\noindent where $\mathbf{I}$ is the identity matrix $p\times p,$ and  the optimizer $\widehat{R}_{\oplus}(\mathbf{x} ; \lambda^{-1})$ corresponds to the solution of this minimization problem, and subject to the constraints $\lambda_{j} \geq 0$ for $j=1,2, \ldots, p$, and $\sum_{j=1}^{p} \lambda_{j}=\tau$ for some $\tau \geq 0$. The solution, denoted by $\widehat{\lambda}(\tau) = \left(\hat{\lambda}_{1}(\tau), \hat{\lambda}_{2}(\tau), \ldots, \hat{\lambda}_{p}(\tau)\right)^{\top}$, allows for variable selection by identifying significant predictors. Specifically, the set of important predictors is estimated as $\widehat{\mathcal{I}}(\tau)=\{j\in  \{1\dots,p\}: \widehat{\lambda}_{j}(\tau)>0\}$, mirroring the approach in linear regression scenarios. This method is referred to as the Fréchet Ridge Selection Operator (\texttt{FRISO}).

The implementation of \texttt{FRISO} necessitates the tuning of the parameter $\tau>0$, which, given sufficient data, is best determined using an independent validation set $ \mathcal{D}_{2}\subset  \{\left(X_{i}, Y_{i}\right): i\in [n]
\}$. The optimal $\tau$ is obtained by minimizing the sum $\sum_{i=1}^{n} d^{2}(Y_{i}, \widehat{R}_{\oplus}(X_{i} ;(\lambda(\tau)^{-1}))$ over $\tau>0$, typically through a grid search method for practical optimization.

\subsubsection{Metric spaces of Negative Type }

We now focus on a general family of metric space statistical objects for which our algorithm exhibits tractable mathematical and computational properties.  First,  consider the separable metric space $(\mathcal{Y},d)$. Denote $M(\mathcal{Y})$ as the set of probability measures on $\mathcal{Y}$, with $M_p(\mathcal{Y})$ representing the probability measures equipped with $p$-th moments:

\begin{equation*}
    M_p(\mathcal{Y}) = \left\{ v \in M(\mathcal{Y}) : \exists \, \omega \in \mathcal{Y}, \int_{\mathcal{Y}} d^p(\omega,x) \, dv(x) < \infty \right\}.
\end{equation*}
This section delves into the intricacies of a metric space $(\mathcal{Y},d)$ of negative type. 

\begin{definition}
A metric space $(\mathcal{Y},d)$ is of negative type if for any finite set $y_1, \ldots, y_n \in \mathcal{Y}$, the following condition holds:
\begin{equation}
    \sum_{i=1}^{n} \sum_{j=1}^{n} f_i f_j d(y_i, y_j) \leq 0,
\end{equation}
for any $f_i \in \mathbb{R}$ with $\sum_{i=1}^n f_i = 0$. Additionally, for distinct measures $v_1, v_2 \in M_1(\mathcal{Y})$, the inequality 
\begin{equation*}
    \int d(x_1, x_2)^2 \, dv_{-}(x_1, x_2) \leq 0
\end{equation*}
is satisfied, where $v_{-} = v_1 - v_2$.
\end{definition}

\begin{theorem}[Schoenberg (1937, 1938)\cite{schoenberg1937certain,schoenberg1938metric}]
    Let $\Omega=(\mathcal{Y},d)$ be of negative type if and only if there is a Hilbert space $\mathcal{H}$ and a map $\phi: \Omega \rightarrow \mathcal{H}$ such that $\forall y, y^{\prime} \in \mathcal{H}, \hspace{0.2cm} d\left(y, y^{\prime}\right)=\left\|\phi(y)-\phi\left(y^{\prime}\right)\right\|^{2}.$ 
\end{theorem}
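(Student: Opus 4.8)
The plan is to prove both directions of the equivalence, and for the forward direction to construct the Hilbert space explicitly from the metric via a mean-zero embedding. First I would fix a base point $o \in \mathcal{Y}$ and define a candidate kernel on $\mathcal{Y} \times \mathcal{Y}$ by the classical polarization formula
\begin{equation*}
\psi(y, y') := \tfrac{1}{2}\bigl( d(y, o) + d(y', o) - d(y, y') \bigr).
\end{equation*}
The key claim is that $\psi$ is positive semidefinite: for any $y_1, \dots, y_n \in \mathcal{Y}$ and any $c_1, \dots, c_n \in \mathbb{R}$, one has $\sum_{i,j} c_i c_j \psi(y_i, y_j) \geq 0$. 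To see this, augment the point set with $y_0 := o$ and set $c_0 := -\sum_{i=1}^n c_i$, so that $\sum_{i=0}^n c_i = 0$; expanding $\sum_{i,j} c_i c_j \psi(y_i,y_j)$ and using $\sum_i c_i d(y_i,o)$ terms telescoping against $c_0$, the whole expression collapses to $-\tfrac12 \sum_{i=0}^n \sum_{j=0}^n c_i c_j d(y_i, y_j)$, which is $\geq 0$ precisely by the negative-type inequality applied to the augmented configuration with mean-zero weights. Then I would invoke the standard Moore--Aronszajn / GNS construction: a positive semidefinite kernel $\psi$ yields a (real) Hilbert space $\mathcal{H}$ and a feature map $\phi: \mathcal{Y} \to \mathcal{H}$ with $\langle \phi(y), \phi(y') \rangle = \psi(y,y')$. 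Finally, compute $\|\phi(y) - \phi(y')\|^2 = \psi(y,y) + \psi(y',y') - 2\psi(y,y') = d(y,o) + d(y',o) - \bigl(d(y,o) + d(y',o) - d(y,y')\bigr) = d(y,y')$, which is the desired identity.

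For the converse, suppose such a Hilbert space $\mathcal{H}$ and map $\phi$ exist with $d(y,y') = \|\phi(y) - \phi(y')\|^2$. Given $y_1, \dots, y_n$ and $f_1, \dots, f_n$ with $\sum_i f_i = 0$, I would substitute and expand:
\begin{equation*}
\sum_{i,j} f_i f_j d(y_i, y_j) = \sum_{i,j} f_i f_j \|\phi(y_i) - \phi(y_j)\|^2 = \sum_{i,j} f_i f_j \bigl( \|\phi(y_i)\|^2 + \|\phi(y_j)\|^2 - 2\langle \phi(y_i), \phi(y_j)\rangle \bigr).
\end{equation*}
The first two groups of terms vanish because $\sum_j f_j = 0$ (respectively $\sum_i f_i = 0$), leaving $-2 \sum_{i,j} f_i f_j \langle \phi(y_i), \phi(y_j)\rangle = -2 \bigl\| \sum_i f_i \phi(y_i) \bigr\|^2 \leq 0$, establishing the finite negative-type inequality. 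The measure-valued inequality in the definition then follows by the same computation carried out with integrals against $v_1$ and $v_2$ in place of finite sums, using $\int dv_- = 0$ to kill the non-cross terms; one needs only the integrability afforded by $v_1, v_2 \in M_1(\mathcal{Y})$ together with the moment hypothesis to justify Fubini, and the conclusion $-2\|\int \phi \, dv_-\|^2 \leq 0$ follows.

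The main obstacle I anticipate is not the algebra — which is a routine polarization exercise — but the functional-analytic bookkeeping: verifying that the GNS/Moore--Aronszajn construction goes through cleanly over $\mathbb{R}$, that $\phi$ is well defined and the inner product is independent of the choice of base point $o$ (different base points give isometric embeddings, differing by a translation), and, for the measure version, justifying the interchange of integration and inner product (i.e.\ that $\int \phi \, dv$ makes sense as a Bochner integral, which requires $v \in M_1$ and separability of $\mathcal{Y}$ to ensure $\phi$ is measurable into a separable subspace of $\mathcal{H}$). I would state these as lemmas and lean on standard references rather than reprove them. One subtlety worth flagging: the theorem as displayed writes "$\forall y, y' \in \mathcal{H}$" where it should read "$\forall y, y' \in \Omega$", and the squared-distance convention ($d = \|\phi(\cdot)-\phi(\cdot)\|^2$, not $\|\phi(\cdot)-\phi(\cdot)\|$) must be tracked carefully throughout, since it is exactly what makes $\psi$ — rather than a Gram matrix of $d^{1/2}$ — the right object.
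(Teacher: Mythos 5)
Your proof is correct, and it is the standard classical argument: the paper itself offers no proof of this result, deferring entirely to the cited Schoenberg references, and your polarization kernel $\psi(y,y')=\tfrac12\bigl(d(y,o)+d(y',o)-d(y,y')\bigr)$ together with the augmented-configuration trick ($y_0=o$, $c_0=-\sum_i c_i$) and the Moore--Aronszajn construction is exactly how the result is proved in those sources. The algebra in both directions checks out (in particular $\psi(y,y)=d(y,o)$, so the norm computation closes correctly under the squared-distance convention), and you are right to flag that the statement's ``$\forall y,y'\in\mathcal{H}$'' should read ``$\forall y,y'\in\Omega$''.
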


Now, we consider a couple of examples of metric spaces of negative type without a vector space structure that can be embedded in a separable Hilbert space.

\begin{example} [Laplacian graph]
Consider a graph $G = (V, E)$ with vertex set $V=\{1,\dots,m\}$ and edge set $E$. The Laplacian matrix $L_G$ is defined as:
\begin{equation}
   L_G = D_G - A_G,
\end{equation}
\noindent where $D_G$ and $A_G$ are the degree and adjacency matrices of $G$, respectively. If we consider the graph--space of the form $ G=(V,E)$ equipped with the Euclidean distance $\|\cdot-\cdot\|_{2}$, the resulting space is of negative type.
\end{example}

\begin{example}[The 2-Wasserstein Distance in the univariate case]
Consider the space of distribution functions on $\mathbb{R}$ with finite second moments, denoted by $\mathcal{W}_2(\mathcal{D}(\mathbb{R}))$. This forms a metric space under the 2-Wasserstein metric that is of negative type. Specifically, the 2-Wasserstein distance between two probability measures $\mu$ and $\nu$ is defined as $\mathcal{W}_2(\mu,\nu)$ and given by the following expression:
\begin{equation}
   \mathcal{W}_2^{2}(\mu,\nu) = \left(\inf_{\gamma \in \Pi(\mu,\nu)} \int_{\mathbb{R}^{2}} \|x-y\|^{2} \, d\gamma(x,y) \right)^{1/2} = \left(\int_{0}^{1} (Q_{\mu}(t)-Q_{\nu}(t))^2 \, dt \right)^{1/2},
\end{equation}
\noindent where $\Pi(\mu,\nu)$ denotes the set of all joint probability measures on $\mathbb{R}^2$ with marginals $\mu$ and $\nu$, and $Q_{\mu}$. Here, $Q_{\mu}$ and $Q_{\nu}$ represent the quantile functions (qdf) of $\mu$ and $\nu$, respectively. The expression on the right-hand side is particularly known as the 2-Wasserstein distance between the CDFs of $\mu$ and $\nu$, and the closed-expressions, only is valid for univariate probability distributions.
\end{example}

\begin{remark}
Spherical data equipped with the angular distance is another example of spaces of negative type. This space is very important for analyzing vector-valued compositional data in the presence of zeros, projecting the compositional data onto the sphere, or analyzing directional spherical data itself \cite{lyons2020strong}
\end{remark}

\subsubsection{Global Fréchet Model for Negative Type Spaces}

Consider the scenario where $(\mathcal{Y}, d)$ is a metric space of negative type. In this context, the squared distance can be expressed as:
\[
d^{2}(y, y') = \|\phi(y) - \phi(y')\|^{2} = \langle \phi(y) - \phi(y'), \phi(y) - \phi(y') \rangle_{\mathcal{H}},
\]
\noindent where $\phi: \mathcal{Y} \rightarrow \mathcal{H}$ is an embedding into a Hilbert space $\mathcal{H}$.

In this case, the Global Fréchet model takes the form:
\[
m(x) = \arg\min_{y \in \mathcal{Y}} \mathbb{E}\left[\omega(x, X) \langle \phi(Y) - \phi(y), \phi(Y) - \phi(y) \rangle_{\mathcal{H}} \right],
\]
\noindent where $\omega(x, X)$ is a weighting function. We denote $m^{\phi}$ as the regression model calculated in the transformed space $\phi(Y) \in \mathcal{H}$.

\begin{proposition}
    Suppose that $(\mathcal{Y}, d)$ is of negative type. The Global Fréchet model can be reinterpreted as a standard linear regression model in the form:

\begin{equation}\label{model:infi}
m^{\phi}(x) = \beta_0 + \beta^{\top}(x - \mu), \quad \text{with} \quad \mu = \mathbb{E}(X),
\end{equation}

\noindent where $\beta_0$ denotes the functional intercept and $\beta$ represents the functional slope coefficients in the transformed Hilbert space $\mathcal{H}$. In addition, the set of important variables of $m$ and $m^{\phi}$ are the same.
\end{proposition}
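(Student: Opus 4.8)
The plan is to unwind the definition of the Global Fréchet model in the negative-type setting and show that the Fréchet mean, after composing with the embedding $\phi$, coincides with a weighted least-squares projection, which is exactly the global Fréchet regression computation of \cite{petersen2019frechet} carried out in $\mathcal{H}$. Concretely, I would first use the Schoenberg embedding to rewrite the objective $\mathbb{E}[\omega(x,X)\langle \phi(Y)-\phi(y), \phi(Y)-\phi(y)\rangle_{\mathcal{H}}]$ as a quadratic functional of $\phi(y) \in \mathcal{H}$. Expanding the inner product and using that $\mathbb{E}[\omega(x,X)] = 1$ (a standard identity for the Fréchet weights, since $\mathbb{E}[(X-\mu)^\top \Sigma^{-1}(x-\mu)] = 0$), the minimization over the Hilbert-space variable $z = \phi(y)$ reduces to minimizing $\|z\|^2 - 2\langle z, \mathbb{E}[\omega(x,X)\phi(Y)]\rangle_{\mathcal{H}}$, whose unconstrained minimizer is $z^\star(x) = \mathbb{E}[\omega(x,X)\phi(Y)]$. (If $\phi(\mathcal{Y})$ is not all of $\mathcal{H}$ one either restricts the minimization to $\overline{\mathrm{conv}}\,\phi(\mathcal{Y})$, as in the intrinsic Fréchet-mean literature, or simply notes that the population object $m^\phi(x)$ is defined as this conditional projection; I would flag this as the honest caveat.)

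Next I would show $z^\star(x)$ is affine in $x$. Since $\omega(x,X) = 1 + (X-\mu)^\top \Sigma^{-1}(x-\mu)$ is affine in $x$, we get
\begin{equation*}
z^\star(x) = \mathbb{E}[\phi(Y)] + \mathbb{E}\bigl[\phi(Y)\,(X-\mu)^\top\bigr]\Sigma^{-1}(x-\mu),
\end{equation*}
which is precisely of the claimed form $m^\phi(x) = \beta_0 + \beta^\top(x-\mu)$ with $\beta_0 = \mathbb{E}[\phi(Y)] \in \mathcal{H}$ the functional intercept and $\beta = \Sigma^{-1}\mathbb{E}[(X-\mu)\phi(Y)^\top]$ (an operator from $\mathbb{R}^p$ to $\mathcal{H}$, i.e. $p$ functional slope coefficients, the $j$-th being $\sum_\ell (\Sigma^{-1})_{j\ell}\mathbb{E}[(X_\ell-\mu_\ell)\phi(Y)]$). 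This is the same least-squares / method-of-moments solution one obtains for ordinary multivariate-response linear regression with response $\phi(Y)$, establishing the ``standard linear regression model'' interpretation and matching equation \eqref{model:infi}.

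For the final clause — that the active sets of $m$ and $m^\phi$ agree — I would argue that variable $j$ is irrelevant for $m^\phi$ exactly when the $j$-th functional slope vanishes, i.e. when $\beta_{\cdot j} = 0$ in $\mathcal{H}$, equivalently when $z^\star(x)$ does not depend on $x_j$. But $m(x) = \phi^{-1}(z^\star(x))$ on the range of $\phi$ (using that $\phi$ is injective, since $d(y,y')=0 \iff \|\phi(y)-\phi(y')\|=0$), so $m(x)$ depends on $x_j$ if and only if $z^\star(x)$ does; hence the two regression functions share the same set of coordinates on which they genuinely vary, which is the definition of the relevant-variable set in both cases. I would state this via the contrapositive: if $\beta_{\cdot j}=0$ then $m^\phi(x)=m^\phi(x')$ whenever $x,x'$ differ only in coordinate $j$, and applying $\phi^{-1}$ gives $m(x)=m(x')$; conversely non-vanishing of $\beta_{\cdot j}$ forces $z^\star$, hence $m$, to vary with $x_j$.

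The main obstacle is the well-posedness of the minimization over $y \in \mathcal{Y}$ versus over $z \in \mathcal{H}$: the computed minimizer $z^\star(x)$ need not lie in $\phi(\mathcal{Y})$, so strictly speaking $m(x)$ is only the ``metric projection'' of the linear predictor back onto the object space. This is exactly the standard subtlety in Fréchet regression, and the cleanest fix is to adopt the same convention as \cite{petersen2019frechet,tucker2021variable} — define $m^\phi$ as the linear predictor in $\mathcal{H}$ and note that the variable-selection conclusion concerns which covariates enter $\beta$, a property intrinsic to the affine map $x \mapsto z^\star(x)$ and therefore unaffected by the final projection step. I would make this precise with a short lemma that the projection $\mathcal{H} \to \mathcal{Y}$ is a fixed (covariate-independent) map, so it cannot introduce or remove dependence on any $x_j$.
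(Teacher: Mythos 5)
Your proposal is correct and follows the same route the paper intends: the paper's own ``proof'' is a one-line citation to Petersen--M\"uller and Schoenberg, and your argument is exactly the working-out of that citation (Schoenberg embedding, $\mathbb{E}[\omega(x,X)]=1$, the weighted-projection characterization $z^{\star}(x)=\mathbb{E}[\omega(x,X)\phi(Y)]$ being affine in $x$, and the identification of the active set through $\beta$). The one caveat you flag yourself --- that the metric projection of $z^{\star}(x)$ back onto $\phi(\mathcal{Y})$ could in principle \emph{remove} dependence on a coordinate, not merely fail to introduce it --- is real, but the paper does not address it either, so your treatment is at least as complete as the original.
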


\begin{proof}
    Consequently of \cite{petersen2019frechet} and the characterization of negative type spaces from \cite{schoenberg1937certain,schoenberg1938metric}.
\end{proof}

\begin{remark}
When $\beta \in \mathcal{H}$ belongs to an infinite-dimensional Hilbert space, the model defined in Equation \ref{model:infi} is, for practical calculation purposes, specified in our setting by the general model of Equation \ref{eqn:generic.ss}. This is because, in practice, we approximate $\beta$ using a finite-dimensional basis of functions.
\end{remark}

%In a series of papers, Yichao Wu propose a penalized ridge variable selection algorithm 
%that constituted the unique variable selection methods for metric space responses. Here, we establish the equivalence of such an algorithm with the prior algorithm introduced.  

%Suppose that we observe a random sample $(X_i, Y_i)_{i=1}^{n}$ and define the individual penalized ridge regression as follows:

%\begin{equation}\label{eqn:friso1}
  %  \min_{\beta\in \mathbb{R}^{p}} \frac{1}{2n} \sum_{i=1}^{n}(Y_{i}-\beta^{T}X_{i})^{2}+%%\frac{1}{2}\sum_{j=1}^{p}  \gamma_{j}\beta^{2}_{j},
%\end{equation}

%\noindent with ridge parameter $\gamma_{j}\geq 0$ for $j=1\dots,p$, and  whose solution is 

%\begin{equation*}
  %  \widehat{\beta}= \left[\frac{1}{n} X^{T}X+diag(\gamma))  \right]^{-1}(\frac{1}%{n}X^{T}Y), 
%\end{equation*}

%\noindent and the corresponding hat matrix is

%\begin{equation}
  %  H(\lambda)= \frac{1}{n}X\left[\frac{1}{n}X^{T}X+diag((\gamma))\right]X^{T},
%\end{equation}
 
%\noindent where $\lambda= (\gamma_{1}^{-1},\dots, \gamma_{p}^{-1})$, $X= (X_{1},\dots, X_{n})$ and  $Y= (Y_{1},\dots, Y_{n})$.

%\noindent Variable selection for model \ref{eqn:friso1} then proceeds by solving

%\begin{equation*}
 %   min_{\lambda\in \mathbb{R}^{p}} \sum_{i}^{n} (Y_{i}-\left[H(\lambda)Y\right]_{i})^{2}
%\end{equation*}

%\noindent subject to $\sum_{j=1}^{p}\lambda_{j}\leq  \tau,$

\subsubsection{Mathematical Equivalence Between \texttt{FRISO} and the Proposed Approach}

In this section, we establish the mathematical equivalence between \texttt{FRISO} and our proposed approach by leveraging the concept of perspective functions in low-rank optimization problems, as introduced in \cite{bertsimas2023new}.

Our problem, as defined in the previous section, is:

\begin{equation} \label{eqn:generic_ss}
\begin{aligned} 
\min_{\substack{\beta \in \mathbb{R}^{p \times m}, \\ s \in \{0,1\}^{p}}} \quad & \sum_{t=1}^{m} \sum_{i=1}^{n} \ell_{t}\left(Y_{it},\, m_{t}\left(X_{i}, \beta_{\cdot t}\right)\right) + \frac{1}{2\gamma} \sum_{t=1}^{m} \|\beta_{\cdot t}\|_{2}^{2} \\
\text{subject to} \quad & \sum_{j=1}^{p} s_{j} \leq k, \\
& \beta_{jt} = 0 \quad \text{if } s_j = 0, \quad \forall j \in [p],\ t \in [m].
\end{aligned}
\end{equation}

\noindent Here, $\beta \in \mathbb{R}^{p \times m}$ represents the matrix of regression coefficients, $s \in \{0,1\}^p$ is a binary vector indicating the selection of features, $\ell_{t}$ is the loss function for task $t$, and $m_t$ is the model for task $t$.

We aim to reformulate this problem by incorporating the feature selection variables $s_j$ directly into the optimization, using the notion of perspective functions to handle the regularization term.

\begin{definition}
 The perspective function of a convex function $f: \mathbb{R}^n \rightarrow \mathbb{R}$ is defined as:

\[
g_f(x, t) = 
\begin{cases}
t f\left( \dfrac{x}{t} \right), & \text{if } t > 0, \\
0, & \text{if } t = 0,\, x = 0, \\
+\infty, & \text{otherwise}.
\end{cases}
\]
   
\end{definition}

In our case, we consider the function $f(x) = \dfrac{1}{2\gamma} \|x\|_2^2$, corresponding to the ridge penalty. The perspective function becomes:

\[
g_f(x, t) = 
\begin{cases}
\dfrac{1}{2\gamma t} \|x\|_2^2, & \text{if } t > 0, \\
0, & \text{if } t = 0,\, x = 0, \\
+\infty, & \text{otherwise}.
\end{cases}
\]

Using the perspective function, we rewrite the regularization term in \eqref{eqn:generic_ss} as:

\[
\frac{1}{2\gamma} \sum_{t=1}^{m} \|\beta_{\cdot t}\|_2^2 = \sum_{j=1}^{p} \sum_{t=1}^{m} \frac{1}{2\gamma} \beta_{jt}^2 = \sum_{j=1}^{p} \sum_{t=1}^{m} g_f(\beta_{jt}, s_j).
\]

This formulation effectively incorporates the feature selection variables $s_j$ into the regularization term. The convention $\beta_{jt}^2 / 0 = +\infty$ when $s_j = 0$ enforces $\beta_{jt} = 0$ if $s_j = 0$.

The optimization problem now becomes:

\begin{equation} \label{eqn:reformulated_problem}
\begin{aligned} 
\min_{\substack{\beta \in \mathbb{R}^{p \times m},\\ s \in \{0,1\}^{p}}} \quad & \sum_{t=1}^{m} \sum_{i=1}^{n} \ell_{t}\left(Y_{it},\, m_{t}\left(X_i, \beta_{\cdot t}\right)\right) + \sum_{j=1}^{p} \sum_{t=1}^{m} \frac{1}{2\gamma s_j} \beta_{jt}^2 \\
\text{subject to} \quad & \sum_{j=1}^{p} s_j \leq k.
\end{aligned}
\end{equation}

To handle the division by $s_j$, we introduce auxiliary variables $\rho_{jt} \geq 0$ and rewrite the regularization term:

\[
\sum_{j=1}^{p} \sum_{t=1}^{m} \frac{1}{2\gamma s_j} \beta_{jt}^2 = \frac{1}{2\gamma} \sum_{j=1}^{p} \sum_{t=1}^{m} \rho_{jt},
\]

subject to the constraints:

\[
\beta_{jt}^2 \leq s_j \rho_{jt}, \quad \forall j \in [p],\ t \in [m].
\]

This ensures that when $s_j = 0$, the constraint $\beta_{jt}^2 \leq 0$ forces $\beta_{jt} = 0$.

\begin{proposition}
The optimization problem defined in Equation \eqref{eqn:generic_ss} is equivalent to solve the similar individual \textit{FRISO} ridge regression algorithm introduced in \cite{tucker2021variable}.
\end{proposition}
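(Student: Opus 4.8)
The plan is to show that the best-subset program \eqref{eqn:generic_ss}, specialized to the global Fréchet loss, coincides with the \texttt{FRISO} program \eqref{eqn:friso} through the perspective reformulation already set up above. First I would instantiate the generic problem: take $m$ to be the dimension of a (truncated) Hilbert-space embedding $\phi(Y)\in\mathcal{H}$, let each $\ell_t$ be the squared loss, and let $m_t(X_i,\beta_{\cdot t})=\beta_{0t}+\beta_{\cdot t}^\top(X_i-\bar X)$ be the linear model of Proposition (Equation \eqref{model:infi}); by that proposition the important-variable set is unchanged under the embedding, so it suffices to argue at the level of $m^\phi$. Then, minimizing over $\beta$ for fixed $s$ in the ridge-penalized least-squares problem gives the classical closed form, and the key identity is that the weight $\omega_{in}(x)$ appearing in the empirical global Fréchet risk is exactly the quantity produced by plugging the ridge solution back in — i.e.\ the profiled objective of \eqref{eqn:reformulated_problem} as a function of $s$ equals $\sum_i \omega_{in}^{(s)}(x)\,d^2(Y_i,\cdot)$ with $\omega_{in}^{(s)}$ the Sherman–Morrison/Woodbury form $1+(X_i-\bar X)^\top(\gamma X_s^\top X_s+I)^{-1}(X_i-\bar X)$ up to the role of $\gamma$.

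The second step is the relaxation/continuous-weight correspondence. Replace the binary $s\in\{0,1\}^p$ by the continuous ridge-strength vector through the substitution $\lambda_j = \gamma s_j$ (more precisely $\lambda_j=v_j$, $v=\lambda^{-1}$ as in \eqref{eqn:friso}), so that $\gamma X_s^\top X_s = \operatorname{diag}(\sqrt\lambda)\,\widetilde\Sigma\,\operatorname{diag}(\sqrt\lambda)$ up to the $n-1$ normalization, and the cardinality constraint $\sum_j s_j\le k$ becomes the budget constraint $\sum_j\lambda_j=\tau$. Under this change of variables the Woodbury-reduced profiled objective of \eqref{eqn:reformulated_problem} becomes termwise identical to the summand of \eqref{eqn:friso}: the factor $1+(x-\bar X)^\top\operatorname{diag}(\sqrt\lambda)[\operatorname{diag}(\sqrt\lambda)\widetilde\Sigma\operatorname{diag}(\sqrt\lambda)+I]^{-1}\operatorname{diag}(\sqrt\lambda)(X_i-\bar X)$ is precisely $\omega_{in}^{(\lambda)}(x)$. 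Hence the two optimization problems have the same objective and the same feasible set after the bijection $s\leftrightarrow\lambda$, and therefore the same optimal support $\widehat{\mathcal I}=\{j:\lambda_j>0\}=\{j:s_j=1\}$, which is the asserted equivalence.

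The main obstacle — and the place I would spend the most care — is making the perspective/auxiliary-variable manipulation rigorous at the optimum when some $s_j$ hit the boundary $0$: one must verify that the convention $\beta_{jt}^2/0=+\infty$ together with the constraints $\beta_{jt}^2\le s_j\rho_{jt}$ correctly forces $\beta_{jt}=0$ and that the profiled objective is lower-semicontinuous in $s$ there, so that the minimum is attained and equals the \texttt{FRISO} value rather than merely bounding it (this is exactly the low-rank perspective-function argument of \cite{bertsimas2023new} and I would cite it for the technical closure). A secondary bookkeeping issue is matching normalizations: \texttt{FRISO} uses $\widetilde\Sigma$ with the $1/(n-1)$ factor and an explicit $+\mathbf{I}$, whereas our ridge constant is $\gamma$, so the identification is $\lambda_j = \gamma (n-1) s_j$ (or one absorbs constants into $\tau$); I would state the dictionary once and then the algebra is the routine Sherman–Morrison expansion. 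Beyond that, the argument is a direct substitution, and no further estimates are needed.
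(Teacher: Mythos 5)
Your route is essentially the one the paper intends: the perspective reformulation of the ridge term (turning $\tfrac{1}{2\gamma}\|\beta_{\cdot t}\|_2^2$ into the per-variable penalties $\tfrac{1}{2\gamma s_j}\beta_{jt}^2$ of \eqref{eqn:reformulated_problem}), followed by profiling out $\beta$ and matching the resulting weights to the \texttt{FRISO} objective \eqref{eqn:friso} via the identity $\operatorname{diag}(\sqrt{\lambda})\bigl[\operatorname{diag}(\sqrt{\lambda})\widetilde{\Sigma}\operatorname{diag}(\sqrt{\lambda})+\mathbf{I}\bigr]^{-1}\operatorname{diag}(\sqrt{\lambda})=\bigl[\widetilde{\Sigma}+\operatorname{diag}(\lambda^{-1})\bigr]^{-1}$. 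In fact you supply more than the paper does: the paper stops at the perspective/auxiliary-variable setup and asserts the proposition, whereas you carry out the Woodbury step that actually identifies the profiled objective with the \texttt{FRISO} weight, and you record the normalization dictionary ($\gamma$ versus $1/(n-1)$ and the explicit $+\mathbf{I}$). That added detail is the substance of the equivalence and is worth keeping.

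One point is overstated and should be repaired: the claim that after the substitution $\lambda_j=\gamma s_j$ the two problems have ``the same feasible set.'' The generic problem \eqref{eqn:generic_ss} ranges over binary $s\in\{0,1\}^p$ with $\sum_j s_j\le k$, which maps to the discrete set of vectors $\lambda\in\{0,\gamma\}^p$ with at most $k$ nonzero entries; \texttt{FRISO} optimizes over the full simplex $\{\lambda\ge 0:\sum_j\lambda_j=\tau\}$, allowing unequal, arbitrarily large coordinates. So the objectives agree termwise under your dictionary, but the feasible sets are in strict inclusion, not bijection, and the two optima (hence the two selected supports) need not coincide without an additional argument. The natural way to close this gap with tools already in the paper is to pass through the Boolean relaxation \eqref{eqn:relax} and invoke the tightness condition of Theorem~\ref{eqn:boundary} (the unambiguous-top-$k$ condition of Pilanci et al.): under tightness the relaxed optimum is binary, and the correspondence between $\{j:s_j=1\}$ and $\{j:\widehat{\lambda}_j(\tau)>0\}$ becomes an honest statement about supports rather than about optimal values. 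Alternatively, weaken the conclusion to what the proposition literally asserts --- that \eqref{eqn:generic_ss} is a constrained (equal-weight, cardinality-budgeted) instance of the individually penalized ridge family, so that solving it amounts to solving a ``similar'' \texttt{FRISO} program. Your lower-semicontinuity concern at $s_j=0$ is legitimate but routine, and citing the perspective-function closure argument of Bertsimas et al.\ suffices there.
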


%\noindent \textbf{Our variable selection method in negative type spaces}

%convex cone, subset convex of Hilbert Space projection density and Laplacian graph

%1) definition as a base elements 
%2) projection steps subset convex hulbert

\subsection{Statistical  Theory}

In this section, we show that with the proper choice of the variable selection method \textit{FRISO} and our equivalent subgradient proposal, we can select the correct variables as \( n \) grows to infinity.

\begin{theorem}[\cite{tucker2021variable}]
    Assume that conditions (U0)--(U2) of Theorem 2 in \cite{petersen2019frechet}. Under Conditions [A--D] (see Appendix), when \( \tau = \tau_{n} \rightarrow \infty \) as \( n \rightarrow \infty \), the solution \( \widehat{\lambda}\left(\tau_{n}\right) \) of Equation~\ref{eqn:friso} satisfies \( \hat{\lambda}_{j}\left(\tau_{n}\right) \xrightarrow{p} \infty \) for \( j \in \mathcal{I} \) and \( \hat{\lambda}_{j'}\left(\tau_{n}\right) \xrightarrow{p} 0 \) for \( j' \notin \mathcal{I} \) as \( n \rightarrow \infty \).
\end{theorem}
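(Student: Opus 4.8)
The plan is to recognize that this theorem is a \emph{citation} of the consistency result established in \cite{tucker2021variable}, so the task is not to reprove it from scratch but to verify that the hypotheses invoked here are exactly those under which the original argument goes through, and then to sketch the mechanism of that argument. First I would restate the relevant quantities: the population risk associated with \eqref{eqn:friso} is a function of the weight vector $\lambda$ through the ridge-regularized Fréchet weights, and the estimator $\widehat{\lambda}(\tau)$ is the minimizer over the simplex-like feasible set $\{\lambda_j \ge 0,\ \sum_j \lambda_j = \tau\}$ of the empirical out-of-sample prediction risk. The key structural fact, inherited from \cite{petersen2019frechet} via conditions (U0)--(U2), is that the empirical Fréchet objective converges uniformly (in $x$ and in $y$) to its population counterpart, with the rate controlled by the curvature/entropy conditions on $(\mathcal{Y},d)$; this is what upgrades pointwise convergence of $\widehat{R}_\oplus$ to uniform control needed to pass limits through the $\argminB$.

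The core steps I would carry out, in order, are: (i) show that the population version of the penalized-ridge Fréchet risk is minimized, along the ray $\tau \to \infty$, by weight configurations that place \emph{all} mass on the true active set $\mathcal{I}$ --- this is where Conditions [A--D] enter, encoding identifiability (the active covariates genuinely reduce the Fréchet risk) and a margin/separation between active and inactive directions; (ii) establish a uniform law of large numbers for the empirical risk over the feasible set, using the negative-type embedding $\phi$ to reduce $d^2$ to a Hilbert-space inner product and hence to handle the Fréchet objective with standard empirical-process tools; (iii) combine (i) and (ii) with an argmin-consistency (epi-convergence) argument to conclude that $\widehat{\lambda}(\tau_n)$ asymptotically concentrates on the true support, i.e. $\widehat{\lambda}_j(\tau_n) \xrightarrow{p} \infty$ for $j \in \mathcal{I}$ (the total budget $\tau_n \to \infty$ must be absorbed somewhere, and it can only be absorbed by active coordinates) and $\widehat{\lambda}_{j'}(\tau_n) \xrightarrow{p} 0$ for $j' \notin \mathcal{I}$ (any mass on inactive coordinates strictly inflates the limiting risk).

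The main obstacle will be step (ii): controlling the empirical process uniformly over the \emph{unbounded} feasible region as $\tau_n \to \infty$. Because the Fréchet weights $\omega_{in}(x)$ are quadratic forms in $X$ scaled by $\operatorname{diag}(\sqrt{\lambda})[\operatorname{diag}(\sqrt{\lambda})\widetilde\Sigma\operatorname{diag}(\sqrt{\lambda}) + \mathbf{I}]^{-1}\operatorname{diag}(\sqrt{\lambda})$, one must show that this matrix stays bounded (it does, being a contraction-type expression) uniformly in $\lambda$, so that the effective weight class has controlled complexity even though $\lambda$ itself diverges; the negative-type embedding is essential here because it linearizes $d^2(Y_i,y)$ and lets the risk be written as a genuinely quadratic object in the Hilbert space, making the uniform convergence tractable. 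A secondary subtlety is ensuring the validation-set split $\mathcal{D}_2$ used to choose $\tau$ does not interfere with the consistency statement --- but since the theorem fixes a deterministic sequence $\tau_n \to \infty$ rather than the data-driven choice, this can be set aside. Once the uniform convergence and the population-level separation are in hand, the conclusion follows by a routine contradiction argument: if $\widehat{\lambda}_{j'}(\tau_n)$ did not go to zero for some inactive $j'$ along a subsequence, the empirical risk at $\widehat{\lambda}(\tau_n)$ would exceed that at a feasible comparator supported on $\mathcal{I}$, contradicting optimality.
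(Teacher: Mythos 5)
Your reading is correct: the paper offers no proof of this theorem at all --- it is imported verbatim from \cite{tucker2021variable}, and the manuscript's only contribution on this point is the subsequent Proposition verifying that Conditions [B]--[D] hold for the linear model in the transformed space. Your sketch of the original argument (population-level separation under [A]--[D], uniform convergence of the empirical Fréchet risk under (U0)--(U2), then argmin consistency) is a fair reconstruction, with one caveat: the cited result is proved for general metric-space responses, so the negative-type embedding you invoke in step (ii) is not part of its proof --- in this paper that embedding only enters afterwards, precisely to check [B]--[D] in the authors' setting --- and presenting it as essential would narrow the theorem beyond what is actually claimed.
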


\begin{remark}
    Theorem 5 only provides a weak conclusion that \( \hat{\lambda}_{j'}\left(\tau_{n}\right) \xrightarrow{p} 0 \) for \( j' \notin \mathcal{I} \) as \( n \rightarrow \infty \) since the objective function (15) is a highly complex function of \( \lambda \). To obtain a stronger result, such as attaining zero almost surely, would require characterizing the gradient of the objective function, which is very challenging for Fréchet regression due to the general non-Euclidean setting.
\end{remark}

The next result shows that Conditions [B]--[D] are satisfied in the linear model in the transformed space for spaces of negative type. Therefore, the previously established consistency results, where \( n \) and \( p \) are fixed (our setting in massive datasets), hold in our environment.

\begin{proposition}
    Conditions [B]--[D] are satisfied by the linear regression model 
    \[
    \phi(Y) = \beta_{0} + \mathbf{X}^\top \boldsymbol{\beta} + \epsilon,
    \]
    which relates to the regression model in the negative-type space given by
    \begin{equation}\label{model:infi}
    m^{\phi}(x) = \beta_0 + \beta^{\top}(x - \mu), \quad \text{with} \quad \mu = \mathbb{E}(X).
    \end{equation}
\end{proposition}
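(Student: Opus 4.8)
The plan is to unwind what Conditions [B]--[D] from \cite{tucker2021variable} actually demand and verify each one directly for the transformed linear model $\phi(Y) = \beta_0 + \mathbf{X}^\top\boldsymbol\beta + \epsilon$. These conditions are regularity assumptions on the population Fréchet objective and its behaviour under the FRISO ridge weighting; in the negative-type setting the squared distance $d^2(Y,y) = \|\phi(Y)-\phi(y)\|_{\mathcal{H}}^2$ is a genuine quadratic form in a Hilbert space, so the Fréchet functional collapses to an ordinary (Hilbert-space-valued) weighted least squares criterion. First I would record this reduction precisely: by the Schoenberg embedding theorem quoted above and by \cite{petersen2019frechet}, minimizing $\mathbb{E}[\omega(x,X)\,d^2(Y,y)]$ over $y\in\mathcal{Y}$ is equivalent to minimizing $\mathbb{E}[\omega(x,X)\|\phi(Y)-v\|^2]$ over $v\in\mathcal{H}$, whose solution is the linear-in-$x$ map \eqref{model:infi}. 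So the FRISO objective \eqref{eqn:friso}, evaluated along the embedding, becomes exactly the ridge-penalized least-squares risk that the conditions in \cite{tucker2021variable} were designed around.

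Second, I would go condition by condition. Condition [B] (typically an identifiability / moment condition ensuring $\Sigma = \operatorname{Cov}(X)$ is invertible and $\mathbb{E}\|\phi(Y)\|^2<\infty$) follows from our standing assumption that $\mathbb{E}(d^2(Y,y))<\infty$ for some $y$, which transfers to $\mathbb{E}\|\phi(Y)-\phi(y)\|^2<\infty$, together with the usual non-degeneracy of the design. Condition [C] (a smoothness / curvature condition on the population objective near its minimizer, guaranteeing a well-separated minimum) holds because the map $v\mapsto \mathbb{E}[\omega(x,X)\|\phi(Y)-v\|^2]$ is strongly convex with Hessian a fixed multiple of the identity on $\mathcal{H}$ — the quadratic structure makes the curvature constant explicit rather than something to be assumed. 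Condition [D] (the regularity of the penalized objective in $\lambda$, e.g. the existence of directional derivatives and the behaviour as $\tau\to\infty$) reduces to differentiating the ridge map $\operatorname{diag}(\sqrt\lambda)[\operatorname{diag}(\sqrt\lambda)\widetilde\Sigma\operatorname{diag}(\sqrt\lambda)+\mathbf{I}]^{-1}\operatorname{diag}(\sqrt\lambda)$, which is a smooth matrix function of $\lambda$ on $\{\lambda\ge 0\}$ away from the boundary, with the requisite limits as components of $\lambda$ go to infinity computed from the standard $(\gamma X_s X_s^\top + I)^{-1}$ algebra already displayed after Theorem~\ref{thm:saddle.reformulation}. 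The fact that the response lives in $\mathcal{H}$ rather than $\mathbb{R}$ changes nothing, since every quantity entering the conditions is either scalar (the weights $\omega_{in}$, the $\lambda_j$) or appears through $\|\cdot\|_{\mathcal{H}}$-norms that behave formally like Euclidean norms.

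Third, I would assemble these verifications and invoke the consistency statement (the theorem of \cite{tucker2021variable} quoted just above) to conclude that, since Conditions [B]--[D] hold for the transformed model and the important-variable set of $m$ equals that of $m^\phi$ (by the Proposition following the negative-type discussion), the FRISO selector — and hence, by the equivalence Proposition, our subgradient procedure — consistently recovers $\mathcal{I}$ for fixed $p$.

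\textbf{Main obstacle.} The genuine difficulty is not analytic but bookkeeping: Conditions [B]--[D] are only referenced as ``see Appendix'' and are inherited verbatim from \cite{tucker2021variable}, so the proof hinges on stating them in a form where the quadratic/Hilbert structure makes verification transparent, and on being careful that each is checked for the \emph{population} objective uniformly in $x$ over the relevant domain (not just pointwise). A secondary subtlety is the infinite-dimensionality of $\mathcal{H}$ when $\beta$ is a functional coefficient: one must ensure the curvature and moment conditions are genuinely dimension-free, i.e. that no constant implicitly blows up with the truncation level $K$; this is where I would be most careful, and it is cleanly handled by phrasing everything through the operator $\Sigma$ on $\mathbb{R}^p$ and the scalar second moment $\mathbb{E}\|\phi(Y)\|_{\mathcal{H}}^2$, both of which are finite and fixed.
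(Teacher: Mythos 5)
The paper itself supplies no proof of this proposition: Conditions [B]--[D] are deferred to an appendix that is empty in the manuscript, and the proposition is stated without any accompanying argument. So there is nothing concrete to compare your plan against; your overall reduction (the Schoenberg embedding turns the Fréchet objective into a weighted least-squares criterion in $\mathcal{H}$, whose quadratic structure should make curvature and regularity conditions automatic) is surely the intended argument, and the obstacle you flag --- that the conditions are never actually stated --- is the real one.

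That said, there is a genuine gap in the reduction as you phrase it. The Fréchet/FRISO objective minimizes over $y \in \mathcal{Y}$, i.e., over the image $\phi(\mathcal{Y}) \subseteq \mathcal{H}$, whereas your verification of the curvature and well-separatedness conditions (``strongly convex with Hessian a fixed multiple of the identity on $\mathcal{H}$'') treats the unconstrained problem over all of $\mathcal{H}$. The Schoenberg embedding guarantees only that $\phi(\mathcal{Y})$ sits inside a Hilbert space, not that it is closed or convex; the weighted barycenter need not lie in $\phi(\mathcal{Y})$, and the restricted minimizer may fail to exist, fail to be unique, or sit where the restricted objective has degenerate curvature. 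Moreover the Fréchet weights $\omega(x,X)$ can be negative, which destroys convexity of the weighted criterion even on a convex image. So conditions of type [B]/[C] --- existence, uniqueness, and a well-separated minimum of the weighted Fréchet functional --- are not free consequences of the quadratic form; they require an additional hypothesis such as $\phi(\mathcal{Y})$ closed and convex in $\mathcal{H}$ (which the paper invokes only informally, in its Discussion). Your plan should either add that hypothesis explicitly or argue that the set of important variables is unaffected by projecting the unconstrained minimizer back onto $\phi(\mathcal{Y})$. The remaining steps --- moment transfer for [B] via $\mathbb{E}\,d^2(Y,y) = \mathbb{E}\|\phi(Y)-\phi(y)\|^2 < \infty$, and smoothness of the ridge map in $\lambda$ for [D] --- are sound as far as they can be checked without the conditions' actual statements.
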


\subsection{Computational implementations details}
The methodologies presented in this paper were implemented using \texttt{Julia}, a highly efficient computational programming language. Specifically, we expanded the \textit{SubsetSelection} package in \texttt{Julia}, which is designed for variable selection of scalar responses. To enhance the functionality and accessibility of our tools, we integrated our \texttt{Julia} implementations with the \texttt{R} programming environment using the \texttt{Rjulia} package. This integration allows users to seamlessly combine the robust computational capabilities of \texttt{Julia} with the extensive statistical libraries available in \texttt{R} for different regression models.

\section{Applications in Scientific Problems Related to Diabetes Mellitus}\label{sec:res}

\subsection{Diabetes Mellitus Disease as Motivation to Analyze Different Clinical Outcomes Structures}

Diabetes Mellitus (DM) \cite{adeghate2006update} is a complex metabolic disorder characterized by elevated blood glucose levels, resulting from insufficient insulin production or the body's inability to use insulin effectively. Nowadays, in modern societies, DM has become a critical public health problem \cite{caspersen2012aging}, affecting millions of individuals and placing substantial strain on healthcare systems worldwide. DM is associated with a spectrum of complications, including cardiovascular risk, kidney failure, blindness, and lower limb amputations \cite{harding2019global}. The escalating prevalence of diabetes is linked to lifestyle changes, sedentary habits, and poor dietary choices, demanding innovative public health approaches for disease management. New approaches based on the principles of precision public health, particularly from the perspective of preventive medicine, are essential \cite{matabuena2024deep}.

In DM, traditional interventions include lifestyle modifications such as regular physical activity and pharmacological treatments like insulin therapy. For non-diabetic individuals, lifestyle modifications \cite{magkos2020diet}, especially weight control, prove to be the most effective therapy, particularly for those in the prediabetic condition. While these therapeutic interventions have demonstrated success in controlling blood glucose levels within specific patient subgroups, their effectiveness varie among individuals. Recent technological advances in medical technology offer new opportunities for continuous patient monitoring and personalized interventions \cite{matabuena2024multilevel}. In the case of glucose metabolism, Continuous Glucose Monitoring (CGM) devices have emerged as revolutionary tools, providing real-time data on glucose levels over time. In diabetic populations, CGM has become the gold standard to optimize individual glycemic control. In healthy populations, the use of CGM is gaining attention in the field of personalized  nutrition to improve individual metabolic capacity. Overall, technology enables a more personalized approach to diabetes care and improves metabolic status in healthy populations. However, from a statistical point of view, the analysis of CGM data is not straightforward. First, the quasi-continuous observations at different time scales are functional and longitudinal. In addition, patients in real-world conditions are monitored in free-living environments, and standard time series techniques are not directly applicable due to chronobiological and registration information differences \cite{matabuena2021glucodensities}. Common practice in the field of digital health involves creating raw time series scalar summaries such as the mean and standard deviation of glucose time series, but several studies have shown that this approach discards a significant amount of individual information about glycemic conditions.

To underscore the versatility and potential of our variable selection framework in medical research, particularly in the context of DM, we address various research questions posed by clinical researchers in the field of diabetes. Table~\ref{tab:research_questions} outlines the different problems examined. The analysis primarily focuses on novel biomarkers derived from continuous patient monitoring, such as functional data and probability distributions as well as another graph structures. We also focus  on more standard settings involving multivariate data, in order to demonstrates the effectiveness and scalability properties of our proposed framework in large cohorts. In the main manuscript, we only focus in the first three examples.

\begin{table}[ht!]
    \centering
    \footnotesize % Smaller font size
    \setlength{\tabcolsep}{8pt} % Adjusted column width
    \begin{tabular}{|c|p{0.6\linewidth}|>{\raggedright\arraybackslash}p{0.3\linewidth}|}
        \hline
        \rowcolor{lightgray}
        \textbf{No.} & \textbf{Research Questions} & \textbf{Prediction Target} \\
        \hline
        1 & Utilizing a large list of clinical biomarkers, can we identify a reduced subset of $k$-biomarkers to predict the clinical outcomes used to diagnosing and monitoring the progression of diabetes mellitus, such as glycosylated hemoglobin and fasting plasma glucose? & Biochemical markers (e.g., glycosylated hemoglobin, fasting plasma glucose) \\
        
        \hline
    2 & With the increasing profileration and importance of digital health and Continuous Glucose Monitoring (CGM) in clinical practice, can we predict glucose mean and variance derived from the CGM over diferent periods in clinical trials by utilizing baseline biomarkers alongside traditional surrogate measures? & CGM-related measures (e.g., glucose mean and variance over time) \\
        \hline
      3 & Recent digital health research introduce functional digital biomarkers like  marginal density of  CGM time series representations. Can we predict this functional representation at the end of a clinical trial using baseline biomarkers? & Functional representation of glucose time series. \\
        \hline
    4 & In order to introduce more sophisticated biomarkers that introduce the temporal dismension of CGM in the model, we aim to predict the temporal quantile CGM response utilizing as a predictor a histogram that contain baseline CGM data. & Multivariate-histogram-based predictor for functional temporal quantile distributional profiles \\
        \hline

        5 & Considering the known statistical association of diet and physical activity levels in the glycaemic levels, can we predict variables indicating diabetes status, physical activity, and dietary profiles simultaneously from a diverse set of biomarkers? & Diabetes risk, physical activity, and dietary profiles (different sources of information) \\
\hline

        6 & In the domain of diabetes research, the CGM data of each individual can be summarize by a graph measuring the similarity between CGM profiles across different days of the week. Can we predict the graph biomarkers estimated at the end of the intervention using baseline data? & Graph-based biomarker capturing intra and inter-day individual glucose variability \\
        \hline
    \end{tabular}
    \caption{Research Questions and Prediction Targets}
    \label{tab:research_questions}
\end{table}

\subsection{Multivariate vectorial Euclidean data}
Surely, one of the most prevalent non-standard scenarios in medical research is the simultaneous analysis of multiple scalar biomarkers as a multivariate vector. In medical practice and diagnosis, the accurate presence of diseases often necessitates multiple diagnostic criteria. For example, in the case of DM, physicians often consider glycosylated hemoglobin (A1C) and fasting plasma glucose (FPG) biomarkers simultaneously \cite{Selvin2011}. A1C is a stable biomarker that reflects the average glucose level over the previous three months, while FPG captures glucose fasting levels in the morning on a particular day and shows greater intra-day variability but measures aspects that A1C cannot \cite{selvin2007short}. In certain cases, such as gestational diabetes, other alternative and more complex biomarkers like an oral glucose tolerance test are considered in the clinical diagnosis.

Given the critical importance and technical challenges in developing reliable models to determine patients' clinical status and accurately characterize diseases phenotypically, it is essential to select the $k$-variables that maximize the statistical associations with multivariate responses. To address this scientific problem, we propose solving the following optimization problem:

\begin{align}\label{eqn:multivariate} 
\min_{\substack{\beta \in \mathbb{R}^{p \times m} \\ s \in \{0,1\}^{p}}} \quad & \sum_{t=1}^{m} \sum_{i=1}^{n} \frac{1}{2}(Y_{it}-\langle X_{i},\beta_{\cdot t} \rangle )^{2} + \frac{1}{2\gamma} \sum_{t=1}^{m} \|\beta_{\cdot t}\|_{2}^{2} \\
\text{s.t.} \quad & \sum_{j=1}^{p} s_{j} \leq k, \\
& \beta_{jt} = 0 \text{ if } s_j = 0, \, \forall j \in [p] , t \in [m]. \nonumber
\end{align}

From a statistical modeling point of view, the study of multivariate Euclidean data is important because it allows handling more complex scenarios. For example, given a separable Hilbert space $\mathcal{H}$, suppose that for any $x \in \mathcal{H}$, $x \approx \sum_{j=1}^{m} c_{xj} \phi_{j}$, where $c_{x} = (c_{x1}, \dots, c_{xm}) \in \mathbb{R}^{m}$ and $\{\phi_{j}\}_{j=1}^{m}$ is an orthogonal basis of functions from $\mathcal{H}$. Then, many regression problems that involve response and predictors in the functional space $\mathcal{H}$ can be expressed directly as multivariate regression problems, in which each datum $x \in \mathcal{H}$ is summarized by a finite-dimensional vector $c_{x}$. This is typical, for example, in the field of functional data when $\mathcal{H} = L^2([0,1])$ as we approximate each datum $x \in \mathcal{H}$ as $x \approx \widehat{c}_{x}$, where $\widehat{c}_{x}$ represents the first $m$ terms of the Karhunen-Loève expansion \cite{daw2022overview}
 (PCA for functional data analysis method in this case).

The literature on multivariate Euclidean data offers various alternatives for variable selection in such environments (see for example \cite{sofer2014variable}), serving as a natural progression from the lasso and other popular linear regularized models. However, there are a limited number of algorithms that effectively handle categorical variables while maintaining computational efficiency. Furthermore, many of these methods aim to improve statistical efficiency by incorporating the correlation structure of random errors among response variables, but they are not computationally scalable and cannot easily handle other loss functions as our case.
In contrast, our method scales effectively for large sample sizes and maintains statistical consistency under certain technical regularity conditions. Consequently, we do not anticipate significant gains in statistical efficiency by introducing the correlation structure of random errors in large-scale datasets.  Given these technical considerations, our paper focuses exclusively on evaluating our algorithm using a real-world example.

\subsubsection{Data Description}

For the prediction of diabetes biomarkers A1C and FPG, we utilized data from the NHANES 2002-2018 cohort \cite{curtin2013national}, which provides a comprehensive list of clinical variables. The datasets include both interview and physical examination data, capturing demographic, and biochemical variables. Overall, we analyzed a random sample of \( n = 58,000 \) individuals and evaluated \( p = 27 \) candidate predictors for the pool of best biomarkers. For a detailed description of the variables used in this analysis, we refer readers to the Supplemental Material.

\subsubsection{Results}

After applying the models described in Equation \ref{eqn:multivariate}, we selected a total of \( k = 9 \) biomarkers, which is one-third of the total number of biomarkers available in the initial pool of variables. The algorithm obtained the optimal solution in just two seconds.

The \( R^2 \) values for A1C and FPG are \( 0.26 \) and \( 0.17 \), respectively from the variable selection model, while in the original model with 27 variables, the \( R^2 \) values are \( 0.28 \) and \( 0.19 \) when evaluated on the training sample. This example demonstrates how the variable selection process can be useful in discarding irrelevant variables and creating more interpretable models.

\subsection{Longitudinal and functional Euclidean data}

Over the past decade, driven by technological advancements, there has been a growing interest in analyzing longitudinal outcomes, facilitated by the proliferation of digital health technologies and electronic health records \cite{jiang2018dynamic, li2017digital, Chen2018, zhou2019longitudinal}. The current capability to collect genetic profiles as omic data alongside physiological information over time presents new opportunities for creating detailed phenotypic characterizations to support medical decisions from a precision medicine perspective.

In the statistical literature, one of the first papers to focus on analyzing longitudinal clinical outcomes with high-dimensional predictors is by \cite{barber2017function}. Recent years have seen new contributions in this direction \cite{parodi2018simultaneous, reimherr2019optimal, fan2017high, mirshani2021adaptive}, primarily concentrating on the conditional mean perspective, linear regression models, and both binary and continuous predictors.

 We briefly introduce the mathematical models from \cite{barber2017function}. Consider a random variable response \( Y \in \mathcal{Y} = L^{2}([0,1]) \) and the following regression model:

\begin{equation}
    Y(t) = \langle \beta(t), X \rangle + \epsilon(t), \quad t \in [0,1].
\end{equation}

\noindent Here, $X \in \mathbb{R}^{p}$, $\beta: [0,1] \to \mathbb{R}^{p}$ represents the dynamic slope function, and $\epsilon(t)$ denotes the random functional error. For identifiability, we assume:
\begin{enumerate}
    \item $\mathbb{E}\bigl[ \epsilon(t) \mid X \bigr] = 0$,
    \item $\operatorname{Cov}\bigl( \epsilon(t), \epsilon(s) \mid X \bigr) = \Sigma(t, s) \in \mathbb{R}$ for all $t, s \in [0,1]$,
\end{enumerate}
\noindent where $\Sigma \in \mathcal{Y} \otimes \mathcal{Y}$ is a covariance operator on the space $\mathcal{Y} = L^{2}([0,1])$.

For practical purposes, we consider observations of the random variable $Y$ at specific time points within an equispaced grid $\Gamma_{m} = \left\{ t_{1} = \dfrac{1}{m},\ t_{2} = \dfrac{2}{m},\ \dots,\ t_{m} = 1 \right\}$. We observe random samples $\{ (X_i, Y_i) \}_{i=1}^{n}$, where for each $i$:
\begin{itemize}
    \item $X_i = \bigl( X_{i,1},\ X_{i,2},\ \dots,\ X_{i,p} \bigr) \in \mathbb{R}^{p}$ is the vector of covariates,
    \item $Y_i = \bigl( Y_i(t_1),\ Y_i(t_2),\ \dots,\ Y_i(t_m) \bigr) \in \mathbb{R}^{m}$ are the corresponding response variables.
\end{itemize}

For variable selection, the original reference \cite{barber2017function} employs the strategy described below:

\begin{align}
 \min_{\substack{\beta= (\beta(1),\cdots, \beta(p))^{\top} \in \mathbb{R}^{p \times m}}} \quad &\sum_{t=1}^{m} \sum_{i=1}^{n} \frac{1}{2}\left(Y_{i}(t) - \langle X_{i}, \beta(t) \rangle \right)^{2} + \frac{1}{2\gamma} \sum_{t=1}^{m} \|\beta(t)\|_{2}^{2}
\end{align}

Following our general formulation, we propose solving:

\begin{align}
 \min_{\substack{\beta= (\beta(1),\cdots, \beta(p))^{\top} \in \mathbb{R}^{p \times m} \\ s \in \{0,1\}^{p}}} \quad &\sum_{t=1}^{m} \sum_{i=1}^{n} \frac{1}{2}\left(Y_{i}(t) - \langle X_{i}, \beta(t) \rangle \right)^{2} +  \frac{1}{2\gamma} \sum_{t=1}^{m} \|\beta(t)\|_{2}^{2} \\
    \text{s.t.} \quad &\sum_{j=1}^{p} s_{j} \leq k, \\
    &\beta_{j}(t_r) = 0, \quad \forall r \in \{1,\dots,p\} \text{ if } s_j = 0. \nonumber
\end{align}

\subsubsection{Data description}

To demonstrate the versatility and efficacy of our framework, we analyze data from the Juvenile Diabetes multicenter study \cite{doi:10.1056/NEJMoa0805017}. This study evaluates the effectiveness and safety of continuous glucose monitoring (CGM) devices in managing glucose levels among diabetes patients. In this randomized trial, 322 individuals, both adults and children receiving intensive therapy for type 1 diabetes, were divided into two groups: one utilizing continuous glucose monitoring and a control group using traditional home monitoring with blood glucose meters. Participants were further stratified into three age groups, with glycated hemoglobin levels ranging from 7.0\% to 10.0\%. The primary clinical outcome measured was the change in glycated hemoglobin levels over 26 weeks \cite{doi:10.1056/NEJMoa0805017}.

Our analysis centers on the mean glucose levels and the standard deviations of glucose measurements across seven distinct time periods, focusing on clinical outcomes. We included only patients with complete data sets in our analysis. Our goal is to identify the baseline variables that significantly influence mean glucose levels and glucose variability during the initial assessment period. To this end, we examined a range of baseline variables, including age, sex, and another patient clinical characteristics. The practical application of our research lies in its potential to identify cost-effective surrogate biomarkers for monitoring the progression of diabetes mellitus disease.

\subsubsection{Results}
We split the 26-week period into 7 intervals. For each interval, we estimated the mean Continuous Glucose Monitoring (CGM) values and the standard deviation. Figure \ref{fig:long} shows the results of the resulting trajectories for all participants. Using these two longitudinal profiles, we applied variable selection methods to identify the three most relevant variables.

The predictors introduced in the model include baseline variables such as gender, age, height, and weight, as well as CGM-derived metrics like the proportion of hypoglycemia, glucose mean, and hyperglycemia. The variables selected were weight, height, and the hyperglycemia range. However, if we run the algorithm using only the longitudinal average glucose profile, the selected variables change, discarding height in favor of the hypoglycemia range. This results highlight the global nature of the algorithm in the case of use as a outcome a bidimensional--longitudinal profile.

\begin{figure}[ht]
	\centering
	\includegraphics[width=0.9\linewidth]{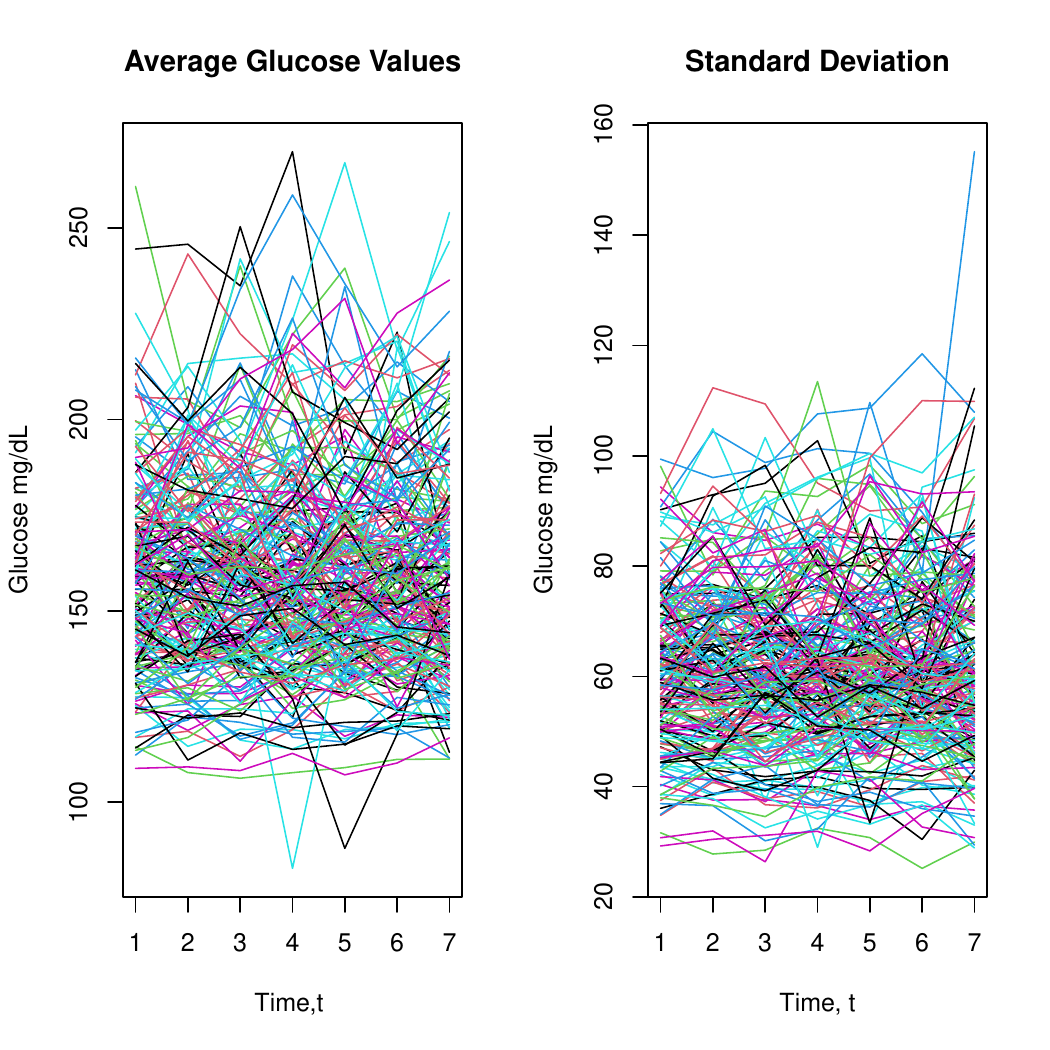}
	\caption{Average glucose trajectories (left) and standard deviation trejectories (right).}
	\label{fig:long}
\end{figure}
\subsection{Distributional representation as a clinical outcome in diabetes research}

\begin{figure}[ht]
	\centering
	\includegraphics[width=0.9\linewidth]{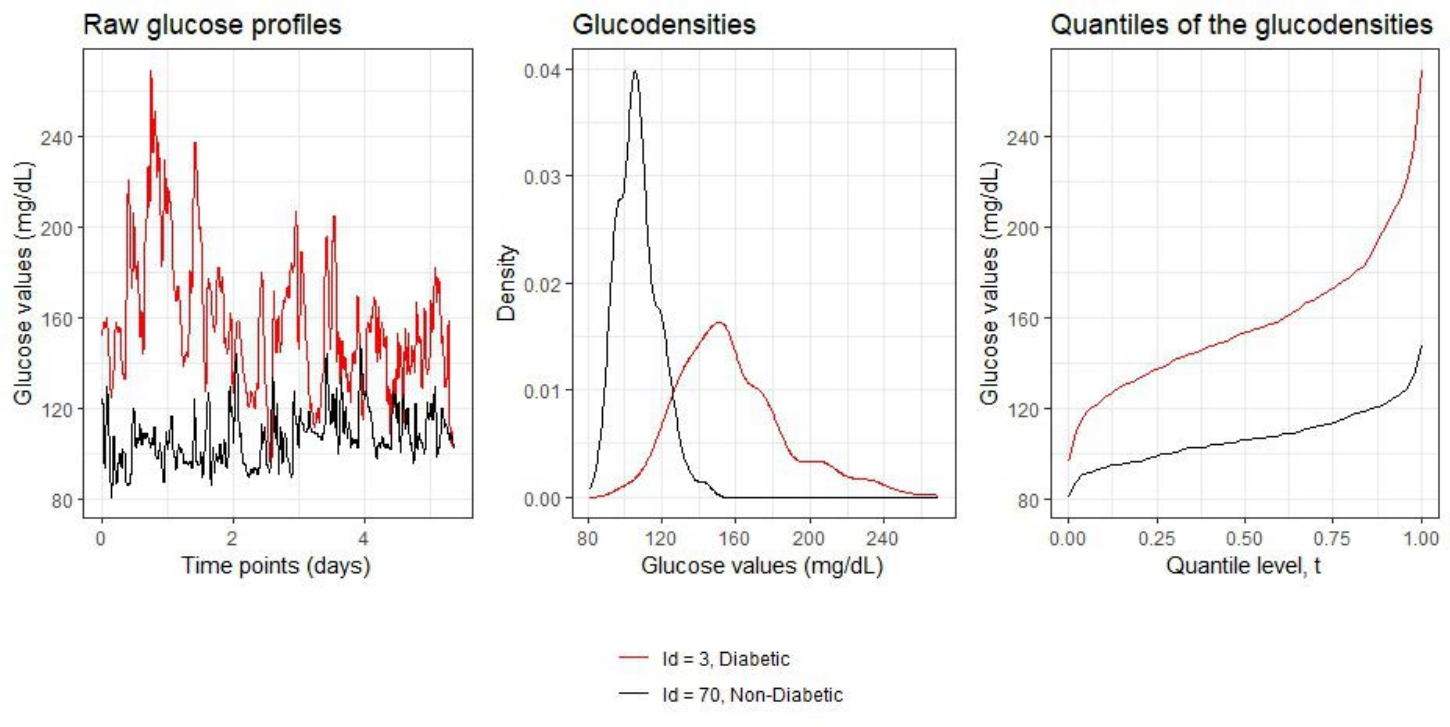}
	\caption{Left: Raw CGM time series of two individuals. Center: The corresponding density functions. Right: The corresponding quantile representation.}
	\label{fig:gluco_overview}
\end{figure}

Distributional data analysis is a cutting-edge methodology for analyzing digital health time series data obtained from various healthcare devices, such as accelerometers, continuous glucose monitors, or heart rate monitors \cite{ghosal2021distributional, matabuena2021distributional}. This approach focuses on analyzing the distributional patterns of biosensor time series data in the form of density, probability distribution, or quantile functions.

Currently, distributional representations have been successfully applied in several clinical domains, including functional magnetic resonance imaging (fMRI) for brain network connectivity analysis \cite{petersen2016functional}, diabetes \cite{matabuena2021glucodensities}, and physical activity analysis. These representations serve as efficient patient outcomes that contain more information about patients' biological processes than other summary metrics commonly used in digital health, such as compositional vector-valued metrics that collapse the distributional pattern of time series into intervals. Another significant advantage is the interpretability of distributional representations in clinical applications, unlike other latent representations derived from time series using neural network models. However, a technical limitation of traditional distributional representations is their omission of the temporal sequence in time series events, as they focus only on marginal distributional patterns.

From a mathematical perspective, distributional data can be considered a subtype of functional data analysis. However, it comes with geometric constraints because density or probability functions are not defined in linear spaces with a vector-valued structure.

Now, let's define the technical details of distributional representations based on time series data. Given a time series for biosensor devices \(\{Y_j\}_{j=1}^m\), the distributional representation can be modeled as a probability density function \(f(\cdot)\) that can be estimated by kernel density estimation:

\begin{equation}
 \widehat{f}(y)=\frac{1}{m}\sum_{j=1}^{m} \frac{1}{h} K(\frac{Y_j-y}{h}),
\end{equation}

\noindent where \(h>0\) is the smoothing parameter and \(K\left(\cdot\right)\) denotes a non-negative real-valued integrable function (Figure \ref{fig:gluco_overview}).

Let \(\mathcal{D}\) be the space of probability density functions \(f\) such that \(\int _{\mathbb{R}}u^2f(u)du <\infty\). To measure the difference between two density functions, \(f\) and \(g\), a metric on \(\mathcal{D}\) is required. We use the \(2\)-Wasserstein distance:

\begin{equation}
d^2_{\mathcal{W}_2}(f,g)= \int_{0}^{1} \left|Q_{f}\left(t\right)-Q_{g}\left(t\right)\right|^{2}dt,  \quad f, g\in \mathcal{D},
\label{eq:wasserstein}
\end{equation}

\noindent In the context of computing the \(2\)-Wasserstein distance, denoted by \(d^2_{\mathcal{W}_2}\) for univariate probability, it only involves the quantile functions \(Q_f\) and \(Q_g\). From a practical perspective, to approximate the \(2\)-Wasserstein distance for the \(i\)-th individual, we observe the time-series observations \(\{Y_{ij}\}_{j=1}^{n_i}\) and, based on the empirical distribution 
\[
\widehat{F}_{f_i}(t) = \frac{1}{n_i} \sum_{j=1}^{n_i} \mathbb{I}\left(Y_{ij} \leq t \right),
\]
we obtain the empirical quantile function denoted as \(\widehat{Q}_{f_i}(t)\).

A commonly adopted approach in distributional data analysis literature is to leverage the quantile representation, which involves embedding probability distributions or density functions in a separable Hilbert space. Suppose that \(Y_{i}(t)= \widehat{Q}_{i}(t)\), \(t\in [0,1]\), and that we observe the probabilities of the quantiles on a grid \(\Gamma = \{t_j\}_{j=1}^{m}\), where \(0=t_1<t_2<\dots<t_m=1\). Then, we can formulate the variable selection problem as:

\begin{align}
 \min_{\substack{\beta= (\beta(t_1),\cdots, \beta(t_m))^{\top} \in \mathbb{R}^{p\times m} \\ s \in \{0,1\}^{p}}} \quad &\sum_{r=1}^{m} \sum_{i=1}^{n} \frac{1}{2}\left(Y_{i}(t_r)-\langle X_{i},\beta(t_r) \rangle \right)^{2} +  \frac{1}{2\gamma} \sum_{r=1}^{m} \|\beta(t_r)\|_{2}^{2}, \\
    \text{s.t.} \quad &\sum_{r=1}^{p} s_{j} \leq k, \\
    &\beta_{j}(t_r) = 0, \forall r \in \{1,\dots,m\} \text{ if } s_j = 0. \nonumber
\end{align}

\subsubsection{Data description}

The increasing aging of modern societies and the growing number of older adults with type 1 diabetes (T1D) represent significant challenges from a public health perspective. Older adults with T1D are more vulnerable to severe hypoglycemia, which can lead to dangerous complications such as altered mental status, seizures, cardiac arrhythmias, and even higher mortality rates.

In this section, we analyze data from \cite{10.2337/dc15-1426}—a continuous glucose monitoring (CGM) cohort study—to examine hypoglycemia events in aging populations. Previous research has highlighted the prevalence of severe hypoglycemia in this population. Notably, it has been found that severe hypoglycemia occurs among adults with T1D at similar rates, regardless of their glycated hemoglobin (HbA1c) levels. This challenges the conventional treatment strategy, which often prioritizes reducing hypoglycemia risk over achieving lower HbA1c levels.

Despite the recognized risk of severe hypoglycemia in older adults with longstanding T1D, there is a significant gap in research exploring the factors contributing to its development, especially using distributional data analysis tools. This approach captures individual profiles across hypoglycemia, hyperglycemia, and normal glucose ranges, rather than focusing solely on specific distributional metrics from CGM time series. The main goal of this section is to address this gap in the literature.

The dataset analyzed is a  case-control study, which includes 200 participants—100 cases of severe hypoglycemia and 100 controls—sourced from 18 diabetes centers.

\subsubsection{Results}
For the 200 participants, we represent the clinical outcome \( Y_{i}(t) = \hat{\mathcal{Q}}_{i}(t) \), where \( t \in [0,1] \), on a grid of  $500$ equispaced points. The predictors include the binary variables: whether the patient lives alone (\textit{LiveAlone}) and gender, as well as C-peptide status, which indicates if the body is producing insulin. Additionally, we consider the continuous variables: glucose, hemoglobin glucose (HbA1c), serum creatinine, and weight, resulting in a total of 7 variables.

These predictors are used in the prediction of the quantile glucose outcome (see Figure \ref{fig:figura1}). We run the algorithms and we select three variables, which include the continuous biomarkers: glucose, HbA1c, and weight. Figure \ref{fig:figura2} shows the marginal \( p \)-values for testing the null hypothesis \( H_{0}: \beta_{r}(t) = 0, \, r \in \{ \text{Glucose, HbA1c, Weight} \} \) for each \( t \in [0,1] \). The results indicate that the importance of these variables is not consistent across the entire domain of the quantile function. For example, weight is only relevant in the hypoglycemia range (\text{Percentile} < 20), while HbA1c is clinically relevant across the whole domain of quantile function. These findings highlight the global nature of the variable selection method.

\begin{figure}[h]
    \centering
    \begin{minipage}{0.45\textwidth}
        \centering
        \includegraphics[width=\linewidth]{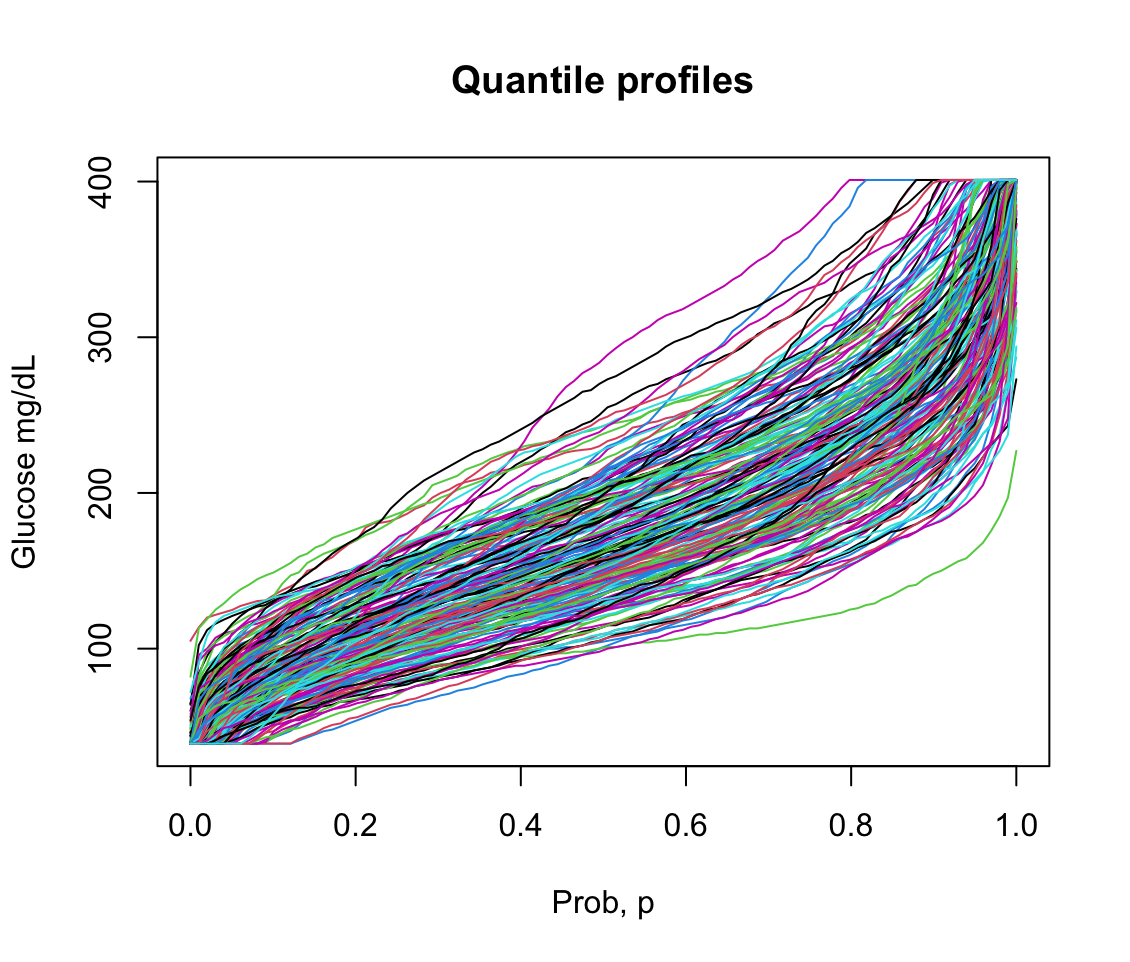}
        \caption{Raw Quantile Outcomes}
        \label{fig:figura1}
    \end{minipage}%
    \hfill
    \begin{minipage}{0.45\textwidth}
        \centering
        \includegraphics[width=\linewidth]{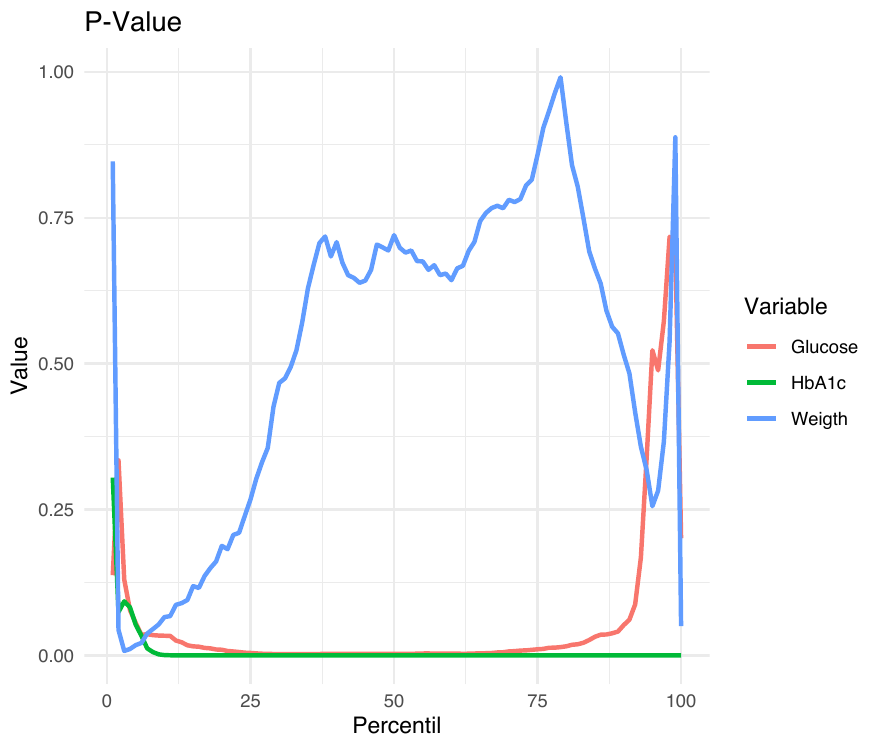}
        \caption{P-values across the temporal domain of the statistical significance of each variable selected}
        \label{fig:figura2}
    \end{minipage}
    \caption{Figures of the statistical analysis for distributional representations}
\end{figure}

\section{Simulation Study}\label{sec:sim}
To examine and analyze the computational efficiency and scalability of our methods in a controlled environment without sacrificing statistical accuracy, we focus on two simulation scenarios for different random responses $Y$: (i) multivariate Euclidean data, and (ii) probability distributions equipped with the $2$-Wasserstein metric. For  (i), we assess the empirical performance of our algorithm with large sample sizes, up to $n = 1000{,}
000$. Due to the computational limitations of competing methods, we do not include a comparison with other approaches in this scenario. In  (ii), we restrict our comparison to the only generic alternative for variable selection with metric space responses, \texttt{FRISO} \cite{tu2020era}. For each scenario, we conduct 200 simulations. We report   execution time and statistical performance metrics.

\subsection{Multivariate Euclidean Responses}
As a first evaluation task, we focus on the multivariate Euclidean responses under the lens of linear regression model. 

Consider the regression model:

\begin{equation}
Y_{ij} = \sum_{r=1}^{p} X_{ir}\beta_{rj} + \epsilon_{ij}, \quad \text{for } i\in [n] , \text{ and } j \in [m],
\end{equation}

\noindent where $\epsilon_{i}=(\epsilon_{i1}, \dots, \epsilon_{im})^{\top}\sim \mathcal{N}(0,\Sigma_{\mathcal{Y}})$ is a multivariate Gaussian random distribution. We assume $X_{i} \sim \mathcal{N}(0,\Sigma_{\mathcal{X}})$, and $\beta_{rj} = \texttt{effect}\in \{0.1,0.5,1\}$ if $r \in \mathcal{S}_{true} \subset \{1, \dots, p\}$, and zero otherwise, for $j=1,2,\dots,m$ and $r=1, \dots, p$. In practice, $n \in \{200, 2000, 20000, 100000\}$, $p \in \{5, 20,  50\}$, and $|\mathcal{S}_{true}| = 2$, and $m\in \{3,20, 50\}$, with $B = 200$ simulation for each scenario. We assume covariance matrices $\Sigma_{\mathcal{X}}$ and $\Sigma_{\mathcal{Y}}$ are Toeplitz matrix of the form:

\[
\Sigma_{\mathcal{X}} = \begin{pmatrix}
1 & \rho_{\mathcal{X}} & \rho_{\mathcal{X}} & \cdots & \rho_{\mathcal{X}} \\
\rho_{\mathcal{X}} & 1 & \rho_{\mathcal{X}} & \cdots & \rho_{\mathcal{X}} \\
\rho_{\mathcal{X}} & \rho_{\mathcal{X}} & 1 & \cdots & \rho_{\mathcal{X}} \\
\vdots & \vdots & \vdots & \ddots & \vdots \\
\rho_{\mathcal{X}} & \rho_{\mathcal{X}} & \rho_{\mathcal{X}} & \cdots & 1
\end{pmatrix},
\]

\noindent, where $\rho_{\mathcal{X}} \in \{0,0.6\}$ and $\rho_{\mathcal{Y}} \in \{0,0.6\}$ are two positive constants. Consequently, there are four scenarios in our analysis. In the first scenario, there is no correlation between predictors and the response, with $\rho_{\mathcal{X}} = \rho_{\mathcal{Y}} = 0$. In the second scenario, we assume a strong correlation between predictors and response, with $\rho_{\mathcal{X}} = \rho_{\mathcal{Y}} = 0.6$. The third scenario assumes $\rho_{\mathcal{X}} = 0.6$, and , $\rho_{\mathcal{Y}} = 0$. In the fourth scenario, the predictors are independent, $\rho_{\mathcal{X}} = 0$, while a strong correlation exists within the response components, $\rho_{\mathcal{Y}} = 0.6$.

Since the main purpose of this analysis is to test the computational performance of our methods in a fixed scenario, we run the methods to identify a total of $|\mathcal{S}_{\text{true}}| = 2$ most relevant variables in each simulation, corresponding to the ground truth. In each simulation, $b = 1, \dots, B = 200$, we record the execution time $t_b$. We also check if the model correctly identifies the true model, denoted as $M_{b, \text{true}}$, defined by the condition:

\[
M_{b, \text{true}} = \mathbb{I}\left\{ \left| \{j \in \{1, \dots, p\} : \beta_{rj} \neq 0 \text{ and } \widehat{\beta}_{rj} \neq 0 \text{ for any } r \in \{1, \dots, m\} \} \right| = |\mathcal{S}_{\text{true}}| = 2 \right\}.
\]

Additionally, we compute two types of errors after fitting a linear regression model using the selected variables:

\begin{itemize}
    \item The maximum error in the estimated $\beta$ coefficients:
    \[
    \epsilon_{\infty, b} = \max_{j=1,\dots,m, \, r=1,\dots,p} \left|\beta_{rj} - \widehat{\beta}_{rj}\right|.
    \]

    \item The empirical $L^{1}$ error:
    \[
    \epsilon_{1, b} = \frac{1}{2 \cdot m} \sum_{r=1}^{p} \sum_{j=1}^{m} \left|\beta_{rj} - \widehat{\beta}_{rj}\right|.
    \]
\end{itemize}

We summarize the results across $B=200$ simulations for the different metrics in terms of the average results. In the  variables 
$\texttt{emax}$ and $\texttt{eaverage}$ corresponding to the results to variables $\epsilon_{\infty,b}$ and $\epsilon_{1,b}$,  we report $mean\pm sd$.

\begin{table}[H]
\centering
\scriptsize % Reduce the font size further
\begin{tabular}{|r|r|r|r|r|l|l|r|r|}
  \hline
  \texttt{p} & \texttt{$\rho_x$} & \texttt{$\rho_y$} & \texttt{effect} & \texttt{eaverage} & \texttt{emax} & \texttt{time} & \texttt{correct} \\ 
  \hline
  5  & 0.00 & 0.00 & 0.10 & 0.0637 ± 0.0320 / 0.003 ± 0.0002 & 0.168 ± 0.0496 / 0.0083 ± 0.0013 & 0.06 / 1.38 & 1.00 / 1.00 \\ 
  5  & 0.00 & 0.00 & 0.50 & 0.1747 ± 0.1631 / 0.0029 ± 0.0003 & 0.352 ± 0.2581 / 0.0083 ± 0.0011 & 0.06 / 1.41 & 1.00 / 1.00 \\ 
  5  & 0.00 & 0.00 & 1.00 & 0.0506 ± 0.0054 / 0.0041 ± 0.0002 & 0.141 ± 0.0227 / 0.0099 ± 0.0013 & 0.05 / 0.86 & 1.00 / 1.00 \\ 
  5  & 0.00 & 0.60 & 0.10 & 0.0660 ± 0.0304 / 0.0027 ± 0.0007 & 0.142 ± 0.0443 / 0.0071 ± 0.0014 & 0.06 / 1.39 & 0.90 / 1.00 \\ 
  5  & 0.00 & 0.60 & 0.50 & 0.2238 ± 0.1022 / 0.0030 ± 0.0010 & 0.464 ± 0.2090 / 0.0075 ± 0.0017 & 0.06 / 1.39 & 1.00 / 1.00 \\ 
  5  & 0.00 & 0.60 & 1.00 & 0.0620 ± 0.0221 / 0.0034 ± 0.0009 & 0.129 ± 0.0283 / 0.0079 ± 0.0019 & 0.05 / 0.97 & 1.00 / 1.00 \\ 
  5  & 0.60 & 0.00 & 0.10 & 0.0469 ± 0.0113 / 0.0032 ± 0.0004 & 0.146 ± 0.0239 / 0.0092 ± 0.0015 & 0.06 / 1.65 & 1.00 / 1.00 \\ 
  5  & 0.60 & 0.00 & 0.50 & 0.0452 ± 0.0046 / 0.0032 ± 0.0004 & 0.137 ± 0.0240 / 0.0108 ± 0.0014 & 0.06 / 1.37 & 1.00 / 1.00 \\ 
  5  & 0.60 & 0.00 & 1.00 & 0.0460 ± 0.0048 / 0.0034 ± 0.0005 & 0.139 ± 0.0240 / 0.0103 ± 0.0026 & 0.03 / 0.95 & 1.00 / 1.00 \\ 
  5  & 0.60 & 0.60 & 0.10 & 0.0513 ± 0.0175 / 0.0030 ± 0.0010 & 0.124 ± 0.0329 / 0.0079 ± 0.0022 & 0.06 / 1.53 & 0.60 / 1.00 \\ 
  5  & 0.60 & 0.60 & 0.50 & 0.1421 ± 0.0843 / 0.0033 ± 0.0008 & 0.248 ± 0.1155 / 0.0084 ± 0.0019 & 0.05 / 0.95 & 1.00 / 1.00 \\ 
  5  & 0.60 & 0.60 & 1.00 & 0.0746 ± 0.1040 / 0.0031 ± 0.0009 & 0.152 ± 0.1428 / 0.0080 ± 0.0022 & 0.03 / 0.81 & 1.00 / 1.00 \\ 
  20 & 0.00 & 0.00 & 0.10 & 0.0449 ± 0.0151 / 0.0026 ± 0.0003 & 0.123 ± 0.0389 / 0.0074 ± 0.0013 & 0.12 / 12.39 & 1.00 / 1.00 \\ 
  20 & 0.00 & 0.00 & 0.50 & 0.0367 ± 0.0031 / 0.0029 ± 0.0004 & 0.110 ± 0.0132 / 0.0087 ± 0.0013 & 0.12 / 9.50 & 1.00 / 1.00 \\ 
  20 & 0.00 & 0.00 & 1.00 & 0.0493 ± 0.0049 / 0.0038 ± 0.0004 & 0.132 ± 0.0127 / 0.0103 ± 0.0016 & 0.07 / 1.66 & 1.00 / 1.00 \\ 
  20 & 0.00 & 0.60 & 0.10 & 0.0491 ± 0.0293 / 0.0023 ± 0.0006 & 0.120 ± 0.0623 / 0.0065 ± 0.0012 & 0.12 / 11.91 & 0.80 / 1.00 \\ 
  20 & 0.00 & 0.60 & 0.50 & 0.3299 ± 0.1907 / 0.0027 ± 0.0006 & 0.496 ± 0.2167 / 0.0070 ± 0.0014 & 0.09 / 9.06 & 1.00 / 1.00 \\ 
  20 & 0.00 & 0.60 & 1.00 & 0.1835 ± 0.2181 / 0.0030 ± 0.0013 & 0.385 ± 0.4349 / 0.0073 ± 0.0020 & 0.06 / 2.78 & 1.00 / 1.00 \\ 
  20 & 0.60 & 0.00 & 0.10 & 0.0410 ± 0.0076 / 0.0033 ± 0.0002 & 0.134 ± 0.0281 / 0.0100 ± 0.0018 & 0.12 / 14.20 & 1.00 / 1.00 \\ 
  20 & 0.60 & 0.00 & 0.50 & 0.0410 ± 0.0064 / 0.0033 ± 0.0003 & 0.132 ± 0.0167 / 0.0105 ± 0.0017 & 0.09 / 5.65 & 1.00 / 1.00 \\ 
  20 & 0.60 & 0.00 & 1.00 & 0.0442 ± 0.0051 / 0.0033 ± 0.0005 & 0.139 ± 0.0205 / 0.0100 ± 0.0023 & 0.06 / 1.64 & 1.00 / 1.00 \\ 
  20 & 0.60 & 0.60 & 0.10 & 0.0404 ± 0.0223 / 0.0026 ± 0.0005 & 0.104 ± 0.0349 / 0.0082 ± 0.0021 & 0.09 / 11.42 & 0.40 / 1.00 \\ 
  20 & 0.60 & 0.60 & 0.50 & 0.0838 ± 0.0687 / 0.0037 ± 0.0014 & 0.164 ± 0.0949 / 0.0088 ± 0.0028 & 0.07 / 4.06 & 1.00 / 1.00 \\ 
  20 & 0.60 & 0.60 & 1.00 & 0.1476 ± 0.1797 / 0.0044 ± 0.0014 & 0.249 ± 0.2360 / 0.0099 ± 0.0023 & 0.06 / 1.29 & 1.00 / 1.00 \\ 
  50 & 0.00 & 0.00 & 0.10 & 0.1027 ± 0.0067 / 0.0025 ± 0.0003 & 0.180 ± 0.0118 / 0.0080 ± 0.0019 & 0.17 / 18.11 & 1.00 / 1.00 \\ 
  50 & 0.00 & 0.00 & 0.50 & 0.0382 ± 0.0052 / 0.0029 ± 0.0003 & 0.107 ± 0.0207 / 0.0087 ± 0.0020 & 0.15 / 19.43 & 1.00 / 1.00 \\ 
  50 & 0.00 & 0.00 & 1.00 & 0.0506 ± 0.0059 / 0.0034 ± 0.0005 & 0.142 ± 0.0295 / 0.0093 ± 0.0010 & 0.11 / 4.79 & 1.00 / 1.00 \\ 
  50 & 0.00 & 0.60 & 0.10 & 0.0942 ± 0.0045 / 0.0024 ± 0.0008 & 0.143 ± 0.0078 / 0.0064 ± 0.0013 & 0.17 / 19.94 & 0.30 / 1.00 \\ 
  50 & 0.00 & 0.60 & 0.50 & 0.0381 ± 0.0160 / 0.0027 ± 0.0010 & 0.100 ± 0.0260 / 0.0077 ± 0.0028 & 0.12 / 19.58 & 1.00 / 1.00 \\ 
  50 & 0.00 & 0.60 & 1.00 & 0.2541 ± 0.2704 / 0.0036 ± 0.0012 & 0.515 ± 0.5287 / 0.0092 ± 0.0028 & 0.10 / 8.20 & 1.00 / 1.00 \\ 
  50 & 0.60 & 0.00 & 0.10 & 0.0440 ± 0.0097 / 0.0032 ± 0.0004 & 0.149 ± 0.0280 / 0.0098 ± 0.0019 & 0.16 / 22.03 & 1.00 / 1.00 \\ 
  50 & 0.60 & 0.00 & 0.50 & 0.0409 ± 0.0047 / 0.0033 ± 0.0001 & 0.130 ± 0.0256 / 0.0094 ± 0.0013 & 0.13 / 16.22 & 1.00 / 1.00 \\ 
  50 & 0.60 & 0.00 & 1.00 & 0.0462 ± 0.0046 / 0.0036 ± 0.0005 & 0.140 ± 0.0235 / 0.0105 ± 0.0018 & 0.09 / 2.49 & 1.00 / 1.00 \\ 
  50 & 0.60 & 0.60 & 0.10 & 0.0435 ± 0.0328 / 0.0027 ± 0.0007 & 0.108 ± 0.0485 / 0.0075 ± 0.0012 & 0.13 / 18.56 & 0.20 / 1.00 \\ 
  50 & 0.60 & 0.60 & 0.50 & 0.0448 ± 0.0148 / 0.0030 ± 0.0011 & 0.115 ± 0.0320 / 0.0082 ± 0.0019 & 0.10 / 11.41 & 1.00 / 1.00 \\ 
  50 & 0.60 & 0.60 & 1.00 & 0.2320 ± 0.2001 / 0.0036 ± 0.0017 & 0.359 ± 0.2532 / 0.0081 ± 0.0024 & 0.07 / 1.69 & 1.00 / 1.00 \\ 
  \hline
\end{tabular}
\caption{Results for Multivariate response linear regression models for $m=20,$ $n = 500/n = 100000$}
\label{tab:sal}
\end{table}

For all possible combinations of results, Table \ref{tab:sal}
provides the specific outcomes for $m = 20$ and $n = 500$/$n = 100,000$. We assume that, in the worst-case scenario for $m = 20$ and $n = 100,000$, the maximum average computation time is less than 20 seconds. In this case, the model successfully selects the correct variables, and the error in the $\beta$ coefficients is close to zero, indicating high statistical efficiency without sacrificing accuracy. 
For $n = 500$, the computation time is a fraction of a second. However, when the value of the $\beta$ coefficient is very small (e.g., $\texttt{effect} = 0.1$), the variable selection capability degrades. This effect is more pronounced in environments with high correlation between predictors and response ($\rho_{\mathcal{X}} = \rho_{\mathcal{Y}} = 0.6$). 
Complete results for the rest of cases with similar conclusions can be found in Appendix. %\ref{sec:siml}.

\subsection{Probability Distributions with the $2-$Wasserstein Metric}

\begin{table}[ht]
\centering
\begin{tabular}{rrrrr}
\toprule
\textbf{n} & \textbf{m} & \textbf{dim(s)} & \textbf{Average Time} & \textbf{Proportion of selected the correct variable} \\
\midrule
200 & 50 & 1 & 0.01 & 1.00 \\
200 & 50 & 8 & 0.08 & 1.00 \\
200 & 150 & 1 & 0.11 & 1.00 \\
200 & 150 & 8 & 0.26 & 1.00 \\
200 & 300 & 1 & 0.19 & 1.00 \\
200 & 300 & 8 & 0.55 & 1.00 \\
2000 & 50 & 1 & 0.04 & 1.00 \\
2000 & 50 & 8 & 0.19 & 1.00 \\
2000 & 150 & 1 & 0.27 & 1.00 \\
2000 & 150 & 8 & 4.56 & 1.00 \\
2000 & 300 & 1 & 1.29 & 1.00 \\
2000 & 300 & 8 & 8.36 & 1.00 \\
20000 & 50 & 1 & 0.27 & 1.00 \\
20000 & 50 & 8 & 1.00 & 1.00 \\
20000 & 150 & 1 & 1.30 & 1.00 \\
20000 & 150 & 8 & 2.35 & 1.00 \\
20000 & 300 & 1 & 1.36 & 1.00 \\
20000 & 300 & 8 & 5.46 & 1.00 \\
100000 & 50 & 1 & 1.18 & 1.00 \\
100000 & 50 & 8 & 4.80 & 1.00 \\
100000 & 150 & 1 & 3.47 & 1.00 \\
100000 & 150 & 8 & 14.47 & 1.00 \\
100000 & 300 & 1 & 6.94 & 1.00 \\
100000 & 300 & 8 & 31.32 & 1.00 \\
\bottomrule
\end{tabular}
\caption{Summary of performance metrics for different parameter configurations.}
\label{tab:summary}
\end{table}

To illustrate the performance of the novel variable selection methods for functional data in the field of distributional data analysis, where the outcome is a probability distribution, we compare their performance with state-of-the-art methods \cite{tucker2021variable}. Similar to \cite{tucker2021variable}, we introduce a simulation example from \cite{alex2019wasserstein} with correlated scalar predictors $X_j \sim \mathcal{U}(-1,1)$, for $j = 1, \dots, p$, generated in two steps: 

\begin{enumerate}
    \item $Z = (Z_1, \dots, Z_p)^{\top}$ is a multivariate Gaussian with $\mathbb{E}(Z_j) = 0$ and $\text{Cov}(Z_j, Z_{j'}) = \rho^{|j-j'|}$.
    \item $X_j = 2\phi(Z_j) - 1$ for $j = 1, \dots, p$, where $\phi$ is the standard normal distribution function.
\end{enumerate}

We set $p = 10$ and $\rho = 0.5$. The Fréchet regression function is given by:
\[
m(x) = \mathbb{E}(Y(\cdot) \mid X = x) = \mu_0 + \beta x_4 + (\sigma_0 + \gamma x_4) \phi^{-1}(\cdot).
\]

Conditional on $X$, the random response $Y$ is generated by adding noise as follows: $Y = \mu + \sigma \phi^{-1}(\cdot)$ with $\mu \mid X \sim \mathcal{N}(\mu_0 + \beta X_4, v_1)$ and $\sigma \mid X \sim \gamma \left(\frac{(\sigma_0 + \gamma X_4)^2}{v_2}, \frac{v_2}{\sigma_0 + \gamma X_4}\right)$. These are independently sampled, where $\mathcal{N}(\cdot, \cdot)$ and $\gamma(\cdot, \cdot)$ denote Gaussian and gamma distributions, respectively. It is evident that only $X_4$ is important. 

The parameters were chosen as $\mu_0 = 0$, $\sigma_0 = 3$, $\beta = 3$, $\gamma = 0.5$, $v_1 = 1$, and $v_2 = 2$, following \cite{tucker2021variable}. In this setup, the competing algorithms almost always select the variable $X_4$, as does our model. Since we analyze the same parameter configuration and focus on the same effect size, the primary purpose of this analysis is to test the computational performance of our methods. We run the algorithm to find a total number of variables $\dim(s) \in \{1, 3, 5, 8\}$ to gain insight into how the computational cost increases and what can be expected in real-world scenarios.

From a theoretical standpoint, if $\dim(s)$ increases, the running time is expected to rise due to the increased complexity of the corresponding optimization problem. Additionally, we focus on different sample sizes $n \in \{200, 2000, 20000, 100000\}$ and varying numbers of points in the grid of quantile functions $m \in \{50, 100, 150, 200\}$. In the case where $n = 200$, the computation takes around 25 minutes on the laptop used for the experiments, whereas larger sample sizes take days. In this generative example, the methods always select the true variable. In each simulation, $b = 1, \dots, B = 200$, we record the execution time $t_b$. We also check if the model correctly identifies the true variable $X_4$, denoted as $M_{b, \text{true}}$, which is defined by the condition:

\[
M_{b, \text{true}} = \mathbb{I} \{\widehat{\beta}_{r4} \neq 0 \text{ for any } r \in \{1, \dots, m\} \}.
\]

Table \ref{tab:summary}
 confirms that the variable selection algorithm performs comparably to \texttt{FRISO} in selecting the correct variable for both small sample sizes ($n = 200$) and large sample sizes ($n = 1,000,000$). In the case of quantile outcomes defined on a grid of 300 points, the algorithm can complete the variable selection process in just 30 seconds, even when identifying up to $8$ variables. We must note that the computational limitations of \texttt{FRISO} that only can support datasets of less $n=300$ observations in less of hour time, limited exhaustive comparative with our proposal. With large sample size $n$, the statistical consistency results of our proposal garantee a good proposal in the practice.

%\subsubsection{Example 2}

%The Fréchet regression function is given by
%\begin{equation*}
 %   m_{\oplus}(x)= \mathbb{E}(Y|X=x)= \mu_0 + \beta(x_{4} + x_{8})(\sigma_{0} + \gamma x_{1}) \phi^{-1}(\cdot).
%\end{equation*}

%Conditional on \( X \), the random response \( Y \) is generated by adding noise as follows: 
%\[ Y = \mu + \sigma\phi^{-1}(\cdot) \]
%with 
%\[ \mu|X \sim \mathcal{N}(\mu_0 + \beta (X_{4} + X_{8}), v_{1}) \]
%and 
%\[ \sigma|X \sim \gamma\left(\frac{(\sigma_0 + \gamma X_{4})^{2}}{v_{2}}, \frac{v_{2}}{\sigma_{0} + \gamma X_{4}}\right), \]
%being independently sampled. It is then obvious that only \( X_{4} \) is important. The additional parameters were chosen as \(\mu_{0}= 0\), \(\sigma_{0}=3\), \(\beta=\frac{3}{4}\), \(\gamma= 1\), \(v_{1}=1\), and \(v_{2}=0.5\).

\section{Discussion}\label{sec:discus}

This paper presents a novel variable selection framework for regression models in general spaces, such as negative type spaces—a setting increasingly common in modern healthcare applications for modeling complex representations of patients data \cite{matabuenacontributions}. We focus primarily on settings where the corresponding loss function $\ell(\cdot, \cdot)$ for model fitting is convex and the response lie in a Hilbert Spaces. The general mathematical formulation supports different regression and classification tasks for each coordinate of the response variable simultaneously, distinguishing it from existing methods in the literature. 

%Our method exploits dual penalization by combining $\ell_2$ and $\ell_0$ norms, providing a natural generalization of our prior work for univariate responses \cite{bertsimas2021sparse} to multivariate data using the same penalty.

Empirical validation in various clinical applications and data structures, focusing on diabetes research, demonstrates the versatility and potential of  variable selection algorithm to analyze sophisticated complex biomarkers that appear in digital health. It illustrates the potential to use complex patient representations to build more accurate clinical profiles and understand patients' each metabolic processes. 

%The method has the potential to be applied to any regression algorithm defined in separable metric spaces $(\Omega, d)$, offering an exact algorithm for convex loss functions and a heuristic for non-convex cases, although the heuristic does not guarantee a global solution in a finite number of iterations. However, for the practical implementation of the algorithm, we must derive and implement the sub-gradient function on a case-by-case basis.

Our variable selection framework is particularly effective in convex subsets of separable Hilbert spaces, enabling efficient solutions through projection arguments. The high computational scalability of our method, demonstrated with datasets containing millions of patients, supports the integration of subsampling techniques such as stability selection and other hyperparameter-based inference techniques. This computational strength allows for statistical inference on variable selection in large-scale medical cohorts and serves as a robust alternative to popular variable selection methods in the statistical literature, including frequentist and Bayesian methods.

\bibliographystyle{unsrt}
%\nocite{*}
\bibliography{referencias.bib/arxiv}

\appendix

\newpage

\newpage

\end{document}